\def\eqref#1{equation~\ref{#1}}
\def\1{\bm{1}}
\DeclareMathAlphabet{\mathsfit}{\encodingdefault}{\sfdefault}{m}{sl}
\SetMathAlphabet{\mathsfit}{bold}{\encodingdefault}{\sfdefault}{bx}{n}
\title{MixKD: Towards Efficient Distillation of \\Large-scale Language Models}
\author{Kevin J Liang$^{1,2}$\thanks{Equal contribution}, Weituo Hao$^{1*}$, Dinghan Shen$^{3}$, Yufan Zhou$^{4}$, Weizhu Chen$^{3}$, \\
\textbf{Changyou Chen}$^{4}$, \textbf{Lawrence Carin}$^{1}$ \\
$^{1}$Duke University \quad $^{2}$Facebook AI \quad $^{3}$Microsoft Dynamics 365 AI \quad \\ $^{4}$State University of New York at Buffalo \\
\texttt{\{kevin.liang, weituo.hao\}@duke.edu}
}
\newcommand{\BERT}{\mathrm{BERT}}
\begin{document}
\maketitle

\begin{abstract}
Large-scale language models have recently demonstrated impressive empirical performance.
Nevertheless, the improved results are attained at the price of bigger models, more power consumption, and slower inference, which hinder their applicability to low-resource (both memory and computation) platforms.
Knowledge distillation (KD) has been demonstrated as an effective framework for compressing such big models.
However, large-scale neural network systems are prone to memorize training instances, and thus tend to make inconsistent predictions when the data distribution is altered slightly.
Moreover, the student model has few opportunities to request useful information from the teacher model when there is limited task-specific data available.
To address these issues, we propose \emph{MixKD}, a data-agnostic distillation framework that leverages mixup, a simple yet efficient data augmentation approach, to endow the resulting model with stronger generalization ability.
Concretely, in addition to the original training examples, the student model is encouraged to mimic the teacher's behavior on the linear interpolation of example pairs as well. We prove from a theoretical perspective that under reasonable conditions \emph{MixKD} gives rise to a smaller gap between the generalization error and the empirical error. To verify its effectiveness, we conduct experiments on the GLUE benchmark, where \emph{MixKD} consistently leads to significant gains over the standard KD training, and outperforms several competitive baselines. Experiments under a limited-data setting and ablation studies further demonstrate the advantages of the proposed approach.
\end{abstract}

\vspace{-2mm}
\section{Introduction}
\vspace{-2mm}
Recent language models (LM) pre-trained on large-scale unlabeled text corpora in a self-supervised manner have significantly advanced the state of the art across a wide variety of natural language processing (NLP) tasks \citep{devlin2018bert, liu2019roberta, yang2019xlnet, joshi2020spanbert, sun2019ernie, clark2020electra, lewis2019bart, bao2020unilmv2}. 
After the LM pre-training stage, the resulting parameters can be fine-tuned to different downstream tasks. 
While these models have yielded impressive results, they typically have millions, if not billions, of parameters, and thus can be very expensive from storage and computational standpoints. 
Additionally, during deployment, such large models can require a lot of time to process even a single sample.
In settings where computation may be limited (\textit{e.g.} mobile, edge devices), such characteristics may preclude such powerful models from deployment entirely. 

One promising strategy to compress and accelerate large-scale language models is knowledge distillation~\citep{zhao2019extreme, tang2019distilling, sun2020mobilebert}.
The key idea is to train a smaller model (a ``student'') to mimic the behavior of the larger, stronger-performing, but perhaps less practical model (the ``teacher''), thus achieving similar performance with a faster, lighter-weight model.
A simple but powerful method of achieving this is to use the output probability logits produced by the teacher model as soft labels for training the student~\citep{hinton2015distilling}.
With higher entropy than one-hot labels, these soft labels contain more information for the student model to learn from. 

Previous efforts on distilling large-scale LMs mainly focus on designing better training objectives, such as matching intermediate representations \citep{sun2019patient, mukherjee2019distilling}, learning multiple tasks together \citep{liu2019attentive}, or leveraging the distillation objective during the pre-training stage \citep{jiao2019tinybert, sanh2019distilbert}.
However, much less effort has been made to enrich task-specific data, a potentially vital component of the knowledge distillation procedure. 
In particular, tasks with fewer data samples provide less opportunity for the student model to learn from the teacher.
Even with a well-designed training objective, the student model is still prone to overfitting, despite effectively mimicking the teacher network on the available data. 

In response to these limitations, we propose improving the value of knowledge distillation by using data augmentation to generate additional samples from the available task-specific data. 
These augmented samples are further processed by the teacher network to produce additional soft labels, providing the student model more data to learn from a large-scale LM. 
Intuitively, this is akin to a student learning more from a teacher by asking more questions to further probe the teacher's answers and thoughts.
In particular, we demonstrate that mixup~\citep{zhang2018mixup} can significantly improve knowledge distillation's effectiveness, and we show with a theoretical framework why this is the case.
We call our framework \emph{MixKD}.

We conduct experiments on 6 GLUE datasets~\citep{wang2019glue} across a variety of task types, demonstrating that \emph{MixKD} significantly outperforms knowledge distillation~\citep{hinton2015distilling} and other previous methods that compress large-scale language models.
In particular, we show that our method is especially effective when the number of available task data samples is small, substantially improving the potency of knowledge distillation.
We also visualize representations learned with and without \emph{MixKD} to show the value of interpolated distillation samples, perform a series of ablation and hyperparameter sensitivity studies, and demonstrate the superiority of \emph{MixKD} over other BERT data augmentation strategies.

\vspace{-2mm}
\section{Related Work}
\vspace{-2mm}
\subsection{Model Compression}
\vspace{-2mm}
Compressing large-scale language models, such as BERT, has attracted significant attention recently. Knowledge distillation has been demonstrated as an effective approach, which can be leveraged during both the pre-training and task-specific fine-tuning stages. Prior research efforts mainly focus on improving the training objectives to benefit the distillation process. 
Specifically, \citet{turc2019well} advocate that task-specific knowledge distillation can be improved by first pre-training the student model. 
It is shown by \citet{clark2019bam} that a multi-task BERT model can be learned by distilling from multiple single-task teachers.
\citet{liu2019multi} propose learning a stronger student model by distilling knowledge from an ensemble of BERT models.
Patient knowledge distillation (PKD), introduced by \citet{sun2019patient}, encourages the student model to mimic the teacher's intermediate layers in addition to output logits.
DistilBERT~\citep{sanh2019distilbert} reduces the depth of BERT model by a factor of 2 via knowledge distillation during the pre-training stage. 
In this work, we evaluate \emph{MixKD} on the case of task-specific knowledge distillation. 
Notably, it can be extended to the pre-training stage as well, which we leave for future work. 
Moreover, our method can be flexibly integrated with different KD training objectives (described above) to obtain even better results. However, we utilize the BERT-base model as the testbed in this paper without loss of generality.
\vspace{-2mm}
\subsection{Data Augmentation in NLP}
\vspace{-2mm}
Data augmentation (DA) has been studied extensively in computer vision as a powerful technique to incorporate prior knowledge of invariances and improve the robustness of learned models~\citep{simard1998transformation,simard2003best, krizhevsky2012imagenet}. 
Recently, it has also been applied and shown effective on natural language data. 
Many approaches can be categorized as label-preserving transformations, which essentially produce neighbors around a training example that maintain its original label. 
For example, EDA~\citep{wei2019eda} propose using various rule-based operations such as synonym replacement, word insertion, swap or deletion to obtain augmented samples. 
Back-translation \citep{yu2018qanet, xie2019unsupervised} is another popular approach belonging to this type, which relies on pre-trained translation models. 
Additionally, methods based on paraphrase generation have also been leveraged from the data augmentation perspective \citep{kumar2019submodular}.
On the other hand, label-altering techniques like mixup~\citep{zhang2018mixup} have also been proposed for language~\citep{guo2019augmenting, chen-etal-2020-mixtext}, producing interpolated inputs and labels for the models predict.
The proposed \emph{MixKD} framework leverages the ability of mixup to facilitate the student learning more information from the teacher. 
It is worth noting that \emph{MixKD} can be combined with arbitrary label-preserving DA modules. 
Back-translation is employed as a special case here, and we believe other advanced label-preserving transformations developed in the future can benefit the \emph{MixKD} approach as well.

\vspace{-2mm}
\subsection{Mixup}
\vspace{-2mm}
Mixup~\citep{zhang2018mixup} is a popular data augmentation strategy to increase model generalizability and robustness by training on convex combinations of pairs of inputs and labels $(x_i,y_i)$ and $(x_j,y_j)$:
\begin{align}
    x' &= \lambda x_{i} + (1 -\lambda) x_{j} \\
    y' &= \lambda y_i + (1 -\lambda) y_j
\end{align}
with $\lambda \in [0,1]$ and $(x', y')$ being the resulting virtual training example.
This concept of interpolating samples was later generalized with Manifold mixup~\citep{verma2019manifold} and also found to be effective in semi-supervised learning settings~\citep{verma2019interpolation, verma2019graphmix, berthelot2019mixmatch, berthelot2019remixmatch}.
Other strategies include mixing together samples resulting from chaining together other augmentation techniques~\citep{hendrycks2020augmix}, or replacing linear interpolation with the cutting and pasting of patches~\citep{yun2019cutmix}.

\vspace{-2mm}
\section{Methodology}
\vspace{-2mm}
\subsection{Preliminaries}
\vspace{-2mm}
In NLP, an input sample $i$ is often represented as a vector of tokens $\wb_{i} = \{w_{i,1}, w_{i,2}, ..., w_{i,T}\}$, with each token $w_{i,t} \in \mathbb R^V$ a one-hot vector often representing words (but also possibly subwords, punctuation, or special tokens) and $V$ being the vocabulary size.
These discrete tokens are then mapped to word embeddings $\xb_{i} = \{x_{i,1}, x_{i,2}, ..., x_{i,T}\}$, which serve as input to the machine learning model $f$.
For supervised classification problems, a one-hot label $y_{i} \in \mathbb{R}^C$ indicates the ground-truth class of $\xb_{i}$ out of $C$ possible classes.
The parameters $\theta$ of $f$ are optimized with some form of stochastic gradient descent so that the output of the model $f(\xb_{i}) \in \mathbb R^C$ is as close to $y_{i}$ as possible, with cross-entropy as the most common loss function:
\begin{equation}
    \mathcal L_{\mathrm{MLE}} = - \frac{1}{n} \sum_{i}^n y_{i} \cdot \log (f(\xb_{i}))
\end{equation}
where $n$ is the number of samples, and $\cdot$ is the dot product.

\vspace{-2mm}
\subsection{Knowledge Distillation for BERT}
\vspace{-2mm}
Consider two models $f$ and $g$ parameterized by $\theta_T$ and $\theta_S$, respectively, with $|\theta_T| \gg |\theta_S|$.
Given enough training data and sufficient optimization, $f$ is likely to yield better accuracy than $g$, due to higher modeling capacity, but may be too bulky or slow for certain applications.
Being smaller in size, $g$ is more likely to satisfy operational constraints, but its weaker performance can be seen as a disadvantage.
To improve $g$, we can use the output prediction $f(\xb_{i})$ on input $\xb_{i}$ as extra supervision for $g$ to learn from, seeking to match $g(\xb_{i})$ with $f(\xb_{i})$.
Given these roles, we refer to $g$ as the student model and $f$ as the teacher model.

While there are a number of recent large-scale language models driving the state of the art, we focus here on BERT~\citep{devlin2018bert} models.
Following \cite{sun2019patient}, we use the notation $\mathrm{BERT}_k$ to indicate a BERT model with $k$ Transformer~\citep{vaswani2017attention} layers.
While powerful, BERT models also tend to be quite large; for example, the default \texttt{bert-base-uncased} ($\mathrm{BERT}_{12}$) has $\sim$110M parameters.
Reducing the number of layers (\textit{e.g.} using $\mathrm{BERT}_{3}$) makes such models significantly more portable and efficient, but at the expense of accuracy.
With a knowledge distillation set-up, however, we aim to reduce this loss in performance.

\vspace{-2mm}
\subsection{Mixup Data Augmentation for Knowledge Distillation}
\vspace{-2mm}
While knowledge distillation can be a powerful technique, if the size of the available data is small, then the student has only limited opportunities to learn from the teacher.
This may make it much harder for knowledge distillation to close the gap between student and teacher model performance.
To correct this, we propose using data augmentation for knowledge distillation.
While data augmentation \citep{yu2018qanet, xie2019unsupervised, yun2019cutmix, kumar2019submodular, hendrycks2020augmix, shen2020simple, qu2020coda} is a commonly used technique across machine learning for increasing training samples, robustness, and overall performance, a limited modeling capacity constrains the representations the student is capable of learning on its own.
Instead, we propose using the augmented samples to further query the teacher model, whose large size often allows it to learn more powerful features.

While many different data augmentation strategies have been proposed for NLP, we focus on mixup~\citep{zhang2018mixup} for generating additional samples to learn from the teacher.
Mixup's vicinal risk minimization tends to result in smoother decision boundaries and better generalization, while also being cheaper to compute than methods such as backtranslation \citep{yu2018qanet, xie2019unsupervised}. 
Mixup was initially proposed for continuous data, where interpolations between data points remain in-domain; its efficacy was demonstrated primarily on image data, but examples in speech recognition and tabular data were also shown to demonstrate generality.

Directly applying mixup to NLP is not quite as straightforward as it is for images, as language commonly consists of sentences of variable length, each comprised of discrete word tokens.
Since performing mixup directly on the word tokens doesn't result in valid language inputs, we instead perform mixup on the word embeddings at each time step $x_{i,t}$~\citep{guo2019augmenting}.
This can be interpreted as a special case of Manifold mixup~\cite{verma2019manifold}, where the mixing layer is set to the embedding layer.
In other words, mixup samples are generated as:
\begin{align}
    x'_{i,t} &= \lambda x_{i,t} + (1 -\lambda) x_{j,t} \quad \forall t \\
    y'_{i} &= \lambda y_i + (1 -\lambda) y_j
\end{align}
with $\lambda \in [0,1]$; random sampling of $\lambda$ from a $\mathrm{Uniform}$ or $\mathrm{Beta}$ distribution are common choices.
Note that we index the augmented sample with $i$ regardless of the value of $\lambda$.
Sentence length variability can be mitigated by grouping mixup pairs by length.
Alternatively, padding is a common technique for setting a consistent input length across samples; thus, if $\xb^{(i)}$ contains more word tokens than $\xb^{(j)}$, then the extra word embeddings are mixed up with zero paddings.
We find this approach to be effective, while also being much simpler to implement.

We query the teacher model with the generated mixup sample $\xb'_i$, producing output prediction $f(\xb'_i)$.
The student is encouraged to imitate this prediction on the same input, by minimizing the objective:
\begin{equation}
    \mathcal L_{\mathrm{TMKD}} = d(f(\xb'_i), g(\xb'_i))
\end{equation}
where $d(\cdot,\cdot)$ is a distance metric for distillation, with temperature-adjusted cross-entropy and mean square error (MSE) being common choices.

Since we have the mixup samples already generated (with an easy-to-generate interpolated pseudolabel $y'_i$), we can also train the student model on these augmented data samples in the usual way, with a cross-entropy objective:
\begin{equation}
    \mathcal L_{\mathrm{SM}} = - \frac{1}{n} \sum_{i}^n y'_i \cdot \log (f(\xb'_i))
\end{equation}

Our final objective for \emph{MixKD} is a sum of the original data cross-entropy loss, student cross-entropy loss on the mixup samples, and knowledge distillation from the teacher on the mixup samples:
\begin{equation}
    \mathcal L = \mathcal L_{\mathrm{MLE}} + \alpha_{\mathrm{SM}} \mathcal L_{\mathrm{SM}} + \alpha_{\mathrm{TMKD}} \mathcal L_{\mathrm{TMKD}} \label{eq:loss}
\end{equation}
where $\alpha_{\mathrm{SM}}$ and $\alpha_{\mathrm{TMKD}}$ are hyperparameters weighting the loss terms.

\vspace{-2mm}
\subsection{Theoretical Analysis}
\vspace{-2mm}
We develop a theoretical foundation for the proposed framework. We wish to prove that by adopting data augmentation for knowledge distillation, one can achieve $i$) a smaller gap between generalization error and empirical error, and $ii$) better generalization.

To this end, assume the original training data $\{\xb_i\}_{i=1}^n$ are sampled i.i.d. from the true data distribution $p(\xb)$, and the augmented data distribution by mixup is denoted as $q(\xb)$ (apparently $p$ and $q$ are dependent). Let $f$ be the teacher function, and $g \in \mathcal{G}$ be the learnable student function. Denote the loss function to learn $g$ as $l(\cdot, \cdot)$\footnote{This is essentially the same as $\mathcal L$ in \eqref{eq:loss}. We use a different notation $l(f(\xb), g(\xb))$ to explicitly spell out the two data-wise arguments $f(\xb)$ and $g(\xb)$.}. The population risk w.r.t. $p(\xb)$ is defined as $\mathcal{R}(f,g,p) = \mathbb{E}_{\xb \sim p(\xb)}\left[ l(f(\xb), g(\xb))\right]$, and the empirical risk as $\mathcal{R}_{emp}(f,g,\{\xb_i\}_{i=1}^n) = \dfrac{1}{n}\sum_{i=1}^n l(f(\xb_i), g(\xb_i))$. A classic statement for generalization is the following: with at least $1-\delta$ probability, we have
\begin{align}\label{eq:generalization_bound_p}
    \mathcal{R}(f,g_p,p) - \mathcal{R}_{emp}(f,g_p,\{\xb_i\}_{i=1}^n) \leq \epsilon, 
\end{align}
where $\epsilon > 0$, and we have used $g_p$ to indicate that the function is learned based on $p(\xb)$. Note different training data would correspond to a different error $\epsilon$ in \eqref{eq:generalization_bound_p}. We use $\epsilon_p$ to denote the minimum value over all $\epsilon$'s satisfying \eqref{eq:generalization_bound_p}. Similarly, we can replace $p$ with $q$, and $\{\xb_i\}_{i=1}^n$ with $\{\xb_i\}_{i=1}^a\cup \{\xb^\prime_i\}_{i=1}^b$ in \eqref{eq:generalization_bound_p} in the data-augmentation case. In this case, the student function is learned based on both the training data and augmented data, which we denote as $g^{*}$. Similarly, we also have a corresponding minimum error, which we denote as $\epsilon^{*}$. Consequently, our goal of better generalization corresponds to proving $\mathcal{R}(f,g^*,p) \leq \mathcal{R}(f,g_p,p)$, and the goal of a smaller gap corresponds to proving $\epsilon^{*} \leq \epsilon_p$. In our theoretical results, we will give conditions when these goals are achievable. First, we consider the following three cases about the joint data $\Xb \triangleq \{\xb_i\}_{i=1}^a\cup \{\xb^\prime_i\}_{i=1}^b$ and the function class $\mathcal{G}$:
\begin{itemize}
    \item Case 1: There exists a distribution $\tilde{p}$ such that $\Xb$ are i.i.d. samples from it\footnote{We make such an assumption because $\xb_i$ and $\xb_i^\prime$ are dependent, thus existence of $\tilde{p}$ is unknown.}; $\mathcal{G}$ is a finite set.
    \item Case 2: There exists $\tilde{p}$ such that $\Xb$ are i.i.d. samples from it; $\mathcal{G}$ is an infinite set.
    \item Case 3: There does not exist a distribution $\tilde{p}$ such that $\Xb$ are i.i.d. samples from it.
\end{itemize}

Our theoretical results are summarized in Theorems~\ref{thm:finite}-\ref{thm:baxter}, which state that with enough augmented data, our method can achieve smaller generalization errors. Proofs are given in the Appendix.

\begin{theorem}\label{thm:finite}
Assume the loss function $l(\cdot, \cdot)$ is upper bounded by $M>0$. Under Case 1, there exists a constant $c>0$ such that if
\[
b \geq \dfrac{M^2\log(\vert \mathcal{G}\vert/ \delta)}{c} - a
\]
then
\[
\epsilon^* \leq \epsilon_p 
\]
where $\epsilon^*$ and $\epsilon_p$ denote the minimal generalization gaps one can achieve with or without augmented data, with at least $1-\delta$ probability.
If further assuming a better empirical risk with data augmentation (which is usually the case in practice), {\it i.e.}, $\mathcal{R}_{emp}(f,g^*,\{\xb_i\}_{i=1}^a\cup \{\xb^\prime_i\}_{i=1}^b ) \leq \mathcal{R}_{emp}(f,g_p,\{\xb_i\}_{i=1}^n)$, we have
\[
\mathcal{R}(f,g^*,p) \leq \mathcal{R}(f,g_p,p)
\]
\end{theorem}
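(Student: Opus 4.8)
The plan is to treat this as a standard uniform-convergence argument for a finite hypothesis class. Under Case~1 the combined sample $\Xb=\{\xb_i\}_{i=1}^a\cup\{\xb'_i\}_{i=1}^b$ is a set of $a+b$ i.i.d.\ draws from $\tilde{p}$, and for each fixed $g\in\mathcal{G}$ the per-example losses $l(f(\cdot),g(\cdot))$ lie in an interval of length $M$. So I would first apply Hoeffding's inequality to a single $g$, giving $\Pr\!\left[\mathcal{R}(f,g,\tilde{p})-\mathcal{R}_{emp}(f,g,\Xb)>t\right]\le\exp(-2(a+b)t^2/M^2)$, then take a union bound over the $|\mathcal{G}|$ elements of $\mathcal{G}$ and equate the total failure probability to $\delta$. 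This yields: with probability at least $1-\delta$, for every $g\in\mathcal{G}$,
\[
\mathcal{R}(f,g,\tilde{p})-\mathcal{R}_{emp}(f,g,\Xb)\ \le\ M\sqrt{\frac{\log(|\mathcal{G}|/\delta)}{2(a+b)}}.
\]
Applying this to the actually-learned student $g^{*}\in\mathcal{G}$ gives $\epsilon^{*}\le M\sqrt{\log(|\mathcal{G}|/\delta)/(2(a+b))}$.

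Next I would pin down the constant $c$. The key observation is that $\epsilon_p$ — the minimal generalization gap in \eqref{eq:generalization_bound_p} without augmentation — is a fixed positive quantity, depending only on $n,\delta,p$ and the learning rule but not on the augmented data. Taking $c\triangleq 2\epsilon_p^2>0$, the hypothesis $b\ge M^2\log(|\mathcal{G}|/\delta)/c-a$ is precisely $a+b\ge M^2\log(|\mathcal{G}|/\delta)/(2\epsilon_p^2)$, which rearranges to $M\sqrt{\log(|\mathcal{G}|/\delta)/(2(a+b))}\le\epsilon_p$. Combined with the bound on $\epsilon^{*}$ above, this gives $\epsilon^{*}\le\epsilon_p$, the first claim.

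For the second claim I would chain the generalization bound for $g^{*}$, the assumed ordering of empirical risks, and $\epsilon^{*}\le\epsilon_p$:
\[
\mathcal{R}(f,g^{*},p)\ \le\ \mathcal{R}_{emp}(f,g^{*},\Xb)+\epsilon^{*}\ \le\ \mathcal{R}_{emp}(f,g_p,\{\xb_i\}_{i=1}^n)+\epsilon_p\ =\ \mathcal{R}(f,g_p,p),
\]
where the final equality uses that $\epsilon_p$ is the \emph{minimal} $\epsilon$ for which \eqref{eq:generalization_bound_p} holds for $g_p$, so the bound is tight and may be read as an identity.

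I expect this last step to be the main obstacle, for two related reasons. First, Hoeffding naturally controls the risk of the augmented learner with respect to $\tilde{p}$, whereas the target inequality is phrased with respect to $p$; Case~1's assumption that such a $\tilde{p}$ exists is what makes the two regimes comparable, but if $p\neq\tilde{p}$ one needs an additional argument that $\mathcal{R}(f,g^{*},p)\le\mathcal{R}(f,g^{*},\tilde{p})$ (or that the two coincide), e.g.\ because the original data distribution sits inside the support of the mixup law $q$. Second, the generalization inequalities only bound population risk from above by empirical risk, so upgrading ``$\epsilon^{*}\le\epsilon_p$ together with ordered empirical risks'' into ``ordered population risks'' genuinely relies on reading $\epsilon_p$ as the exact $(1-\delta)$-quantile of the gap rather than as slack. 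The remaining ingredients — Hoeffding, the union bound, and the algebra defining $c$ — are routine.
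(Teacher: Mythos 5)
Your high-level strategy is the same as the paper's: the Hoeffding-plus-union-bound step is exactly the finite-class theorem the paper quotes from Mohri et al., the constant $c$ is chosen so that the sample-size condition rearranges into the desired comparison of gaps, and your chain for the second claim (reading $\epsilon_p$ and $\epsilon^*$ as the realized, tight gaps so that $\mathcal{R}(f,g^*,p)\le\mathcal{R}_{emp}(f,g^*,\Xb)+\epsilon^*\le\mathcal{R}_{emp}(f,g_p,\{\xb_i\}_{i=1}^n)+\epsilon_p=\mathcal{R}(f,g_p,p)$) is the intended reading. However, the obstacle you flag at the end is a genuine gap in your argument, and it is precisely the step the paper's proof supplies. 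Your uniform-convergence bound controls $\mathcal{R}(f,g,\tilde p)-\mathcal{R}_{emp}(f,g,\Xb)$, whereas $\epsilon^*$ is by definition the gap $\mathcal{R}(f,g^*,p)-\mathcal{R}_{emp}(f,g^*,\Xb)$ measured under $p$; with your choice $c=2\epsilon_p^2$ the conclusion $\epsilon^*\le\epsilon_p$ only follows if $p=\tilde p$ or if the change of measure happens to act in your favor, and you do not supply such an argument.

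The paper closes this hole not by showing $\mathcal{R}(f,g^*,p)\le\mathcal{R}(f,g^*,\tilde p)$, but by identifying the common distribution with the mixture $r(\xb)=\frac{a\,p(\xb)+b\,q(\xb)}{a+b}$ and decomposing
\[
\mathcal{R}(f,g^*,p)-\mathcal{R}_{emp}(f,g^*,\Xb)\;=\;\mathcal{R}(f,g^*,r)-\mathcal{R}_{emp}(f,g^*,\Xb)\;+\;\tfrac{b}{a+b}\int l(f(\xb),g^*(\xb))\,\bigl(p(\xb)-q(\xb)\bigr)\,d\xb,
\]
bounding the shift term by $\triangle=\int l(f(\xb),g^*(\xb))\bigl(p(\xb)-q(\xb)\bigr)\,d\xb$ and the first term by $M\sqrt{\log(|\mathcal{G}|/\delta)/(2(a+b))}$, and then absorbing the shift into the constant: the condition used is $b\ge M^2\log(|\mathcal{G}|/\delta)/\bigl(2(\epsilon_p-\triangle)^2\bigr)-a$, i.e.\ $c=2(\epsilon_p-\triangle)^2$, which gives $\epsilon_p\ge M\sqrt{\log(|\mathcal{G}|/\delta)/(2(a+b))}+\triangle$ and hence $\epsilon^*\le\epsilon_p$. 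So the missing idea is a distribution-shift correction term, and the ``there exists a constant $c$'' clause in the statement is exactly the freedom needed to absorb it; your value $c=2\epsilon_p^2$ is in general too large. With that modification the rest of your argument goes through as in the paper.
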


\begin{theorem}\label{thm:rad}
Assume the loss function $l(\cdot, \cdot)$ is upper bounded by $M>0$ and Lipschitz continuous. Fix the probability parameter $\delta$. Under Case 2, there exists a constant $c>0$ such that if
\[
b \geq \dfrac{M^2\log(1/\delta)}{c}-a
\] 
then
\[
\epsilon^* \leq \epsilon_p 
\]
where $\epsilon^*$ and $\epsilon_p$ denote the minimal generalization gaps one can achieve with or without augmented data, with at least $1-\delta$ probability.
If further assuming a better empirical risk with data augmentation, {\it i.e.}, $\mathcal{R}_{emp}(f,g^*,\{\xb_i\}_{i=1}^a\cup \{\xb^\prime_i\}_{i=1}^b ) \leq \mathcal{R}_{emp}(f,g_p,\{\xb_i\}_{i=1}^n)$, we have
\[
\mathcal{R}(f,g^*,p) \leq \mathcal{R}(f,g_p,p)
\]
\end{theorem}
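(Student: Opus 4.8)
The plan is to re-run the argument of Theorem~\ref{thm:finite} but, since $\mathcal{G}$ is now infinite, to replace the Hoeffding-plus-union-bound step by a symmetrization argument controlled by the Rademacher complexity of the loss class, and to use the extra Lipschitz hypothesis on $l$ exactly where the union bound used to enter. Under Case~2 the pooled sample $\mathbf{X}=\{\mathbf{x}_i\}_{i=1}^a\cup\{\mathbf{x}'_i\}_{i=1}^b$ is a collection of $a+b$ i.i.d.\ draws from $\tilde p$, so the textbook uniform-deviation bound gives: with probability at least $1-\delta$, for every $g\in\mathcal{G}$,
\[
\mathcal{R}(f,g,q)-\mathcal{R}_{emp}\left(f,g,\mathbf{X}\right)\;\le\;2\,\mathfrak{R}_{a+b}(l\circ\mathcal{G})+M\sqrt{\frac{\log(1/\delta)}{2(a+b)}},
\]
where $\mathcal{R}(f,g,q)$ denotes the population risk under $\tilde p$ (the distribution playing the role of $q$ here) and $\mathfrak{R}_m$ is the Rademacher complexity on $m$ points. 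Because $l(\cdot,\cdot)$ is Lipschitz, Talagrand's contraction lemma yields $\mathfrak{R}_{a+b}(l\circ\mathcal{G})\le L\,\mathfrak{R}_{a+b}(\mathcal{G})$, and for a fixed class of finite capacity $\mathfrak{R}_{a+b}(\mathcal{G})=O(1/\sqrt{a+b})$; hence the right-hand side above is at most $\kappa\,M\sqrt{\log(1/\delta)/(a+b)}$ for a constant $\kappa$ depending only on $L$ and the capacity of $\mathcal{G}$. This is the point at which fixing $\delta$ matters, since the constant $c$ below is permitted to absorb it.

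Applying the display to the learned student $g^{*}$ gives $\epsilon^{*}\le\kappa\,M\sqrt{\log(1/\delta)/(a+b)}$ on the same $1-\delta$ event. Choosing $c=\epsilon_p^{2}/\kappa^{2}$, the hypothesis $b\ge M^{2}\log(1/\delta)/c-a$ is precisely $a+b\ge\kappa^{2}M^{2}\log(1/\delta)/\epsilon_p^{2}$, which forces $\kappa\,M\sqrt{\log(1/\delta)/(a+b)}\le\epsilon_p$, and therefore $\epsilon^{*}\le\epsilon_p$; this is the first assertion. For the second assertion I would chain, on the good event,
\[
\mathcal{R}(f,g^{*},p)\;\le\;\mathcal{R}(f,g^{*},q)\;\le\;\mathcal{R}_{emp}\left(f,g^{*},\mathbf{X}\right)+\epsilon^{*}\;\le\;\mathcal{R}_{emp}\left(f,g_p,\{\mathbf{x}_i\}_{i=1}^n\right)+\epsilon_p\;,
\]
using successively a comparison of the clean risk under $p$ with the mixup risk under $q$, the uniform bound for $g^{*}$, and then the assumed empirical-risk improvement together with $\epsilon^{*}\le\epsilon_p$; identifying the last expression with $\mathcal{R}(f,g_p,p)$ through the definition of $\epsilon_p$ as the minimal admissible gap for $g_p$ then closes the chain.

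The main obstacle is twofold. First, making the Rademacher term genuinely fit the advertised shape $M^{2}\log(1/\delta)/c$ is not automatic: strictly one only knows $\mathfrak{R}_m(\mathcal{G})\to 0$, so one must either impose an explicit capacity condition on $\mathcal{G}$ (as above, giving the clean $O(1/\sqrt{m})$ rate) or, more weakly, argue that \emph{some} finite threshold of the stated form exists because both summands in the uniform bound vanish as $a+b\to\infty$. Second, the final risk comparison leans on the relation between the mixup population risk $\mathcal{R}(\cdot,\cdot,q)$ and the clean population risk $\mathcal{R}(\cdot,\cdot,p)$, and on treating $\epsilon_p$ as essentially the exact generalization gap for $g_p$ rather than merely an upper bound; these are structural facts about the mixup construction and about how $\epsilon_p$ is defined, not consequences of concentration, and they are where the hypotheses of the theorem must be invoked most carefully.
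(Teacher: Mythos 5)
Your overall skeleton matches the paper's: view the pooled sample $\Xb=\{\xb_i\}_{i=1}^a\cup\{\xb'_i\}_{i=1}^b$ as i.i.d.\ from a single distribution, replace the finite-class union bound of Theorem~\ref{thm:finite} by a Rademacher-complexity bound, and spend the Lipschitz hypothesis on the contraction step. But there is a genuine gap where your bound meets the definition of $\epsilon^*$. The uniform deviation bound you invoke controls $\mathcal{R}(f,g,\tilde p)-\mathcal{R}_{emp}(f,g,\Xb)$, the gap with respect to the \emph{pooled} distribution, whereas $\epsilon^*$ is the minimal $\epsilon$ with $\mathcal{R}(f,g^*,p)-\mathcal{R}_{emp}(f,g^*,\Xb)\le\epsilon$, i.e.\ the population risk is still taken under the clean distribution $p$. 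Converting one into the other costs an explicit distribution-shift term: the paper identifies the pooled distribution as the mixture $r=(ap+bq)/(a+b)$ and adds $\int l(f(\xb),g^*(\xb))\,(p(\xb)-r(\xb))\,d\xb\le\triangle$, where $\triangle=\int l(f(\xb),g^*(\xb))(p(\xb)-q(\xb))\,d\xb$, to the concentration bound. You instead bridge the two risks with the inequality $\mathcal{R}(f,g^*,p)\le\mathcal{R}(f,g^*,q)$, which is unjustified and not generally true; you flag it yourself as a ``structural fact about the mixup construction,'' but it is not a fact --- it is precisely the missing step, and it is needed already for the first assertion $\epsilon^*\le\epsilon_p$, not only for the risk comparison at the end.

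The second divergence is how the bound is forced into the advertised shape $M^2\log(1/\delta)/c$. You try to make the Rademacher term itself decay as $O(1/\sqrt{a+b})$, which requires a capacity assumption on $\mathcal{G}$ that the theorem does not make. The paper avoids this entirely: it keeps $2L\mathfrak{R}_r(\mathcal{G})$ explicit and simply defines $c=2(\epsilon_p-\triangle-2L\mathfrak{R}_r(\mathcal{G}))^2$, so both the complexity term and $\triangle$ are absorbed into the constant rather than made to vanish. With that device and the $\triangle$ decomposition in place, $b\ge M^2\log(1/\delta)/c-a$ gives $M\sqrt{\log(1/\delta)/(2(a+b))}\le\epsilon_p-\triangle-2L\mathfrak{R}_r(\mathcal{G})$, hence $\mathcal{R}(f,g^*,p)-\mathcal{R}_{emp}(f,g^*,\Xb)\le\epsilon_p$ and so $\epsilon^*\le\epsilon_p$; the final claim then follows because $\epsilon_p$ is by definition the exact gap for $g_p$, so $\mathcal{R}(f,g_p,p)=\mathcal{R}_{emp}(f,g_p,\{\xb_i\}_{i=1}^n)+\epsilon_p$, and no separate comparison between the $p$- and $q$-risks is needed. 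Your last paragraph correctly diagnoses both weak points, but diagnosing them does not repair the argument.
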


A more interesting setting is Case 3. Our result is based on \cite{Baxter_2000}, which studies learning from different and possibly correlated distributions. 
\begin{theorem}\label{thm:baxter}
Assume the loss function $l(\cdot, \cdot)$ is upper bounded. Under Case 3, there exists constants $c_1, c_2, c_3>0$ such that if
\[
b \geq \dfrac{a \log(4/\delta)}{c_1 a - c_2} \text{ and } a \geq c_3
\] 
then
\[
\epsilon^* \leq \epsilon_p 
\]
where $\epsilon^*$ and $\epsilon_p$ denote the minimal generalization gaps one can achieve with or without augmented data, with at least $1-\delta$ probability.
If further assuming a better empirical risk with data augmentation, {\it i.e.}, $\mathcal{R}_{emp}(f,g^*,\{\xb_i\}_{i=1}^a\cup \{\xb^\prime_i\}_{i=1}^b ) \leq \mathcal{R}_{emp}(f,g_p,\{\xb_i\}_{i=1}^n)$, we have
\[
\mathcal{R}(f,g^*,p) \leq \mathcal{R}(f,g_p,p)
\]
\end{theorem}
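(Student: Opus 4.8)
The plan is to mirror the two-step template behind Theorems~\ref{thm:finite}--\ref{thm:rad} --- derive an explicit high-probability generalization bound for the augmented-data estimator $g^*$, derive the analogous bound for $g_p$ without augmentation, and choose constants so that the former dominates --- but with the single-distribution uniform-convergence argument replaced, in the augmented case, by the multi-distribution bound of \cite{Baxter_2000}, which is designed precisely for samples drawn from several related and possibly correlated sources. The first step is to recast $\Xb=\{\xb_i\}_{i=1}^a\cup\{\xb_i^\prime\}_{i=1}^b$ in Baxter's setup: the natural choice is to let each genuine point $\xb_i$ induce a ``task'' --- the distribution of mixup outputs that involve $\xb_i$ --- so that we have $a$ mutually correlated tasks with on the order of $b/a$ examples per task, with $p$ lying inside (or at the boundary of) this environment. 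Baxter's uniform-convergence inequality over the student class $\mathcal{G}$ then yields, with probability at least $1-\delta$, a gap of the schematic form
\[
\epsilon^*\ \lesssim\ \sqrt{\frac{\mathcal{C}(\mathcal{G})+\log(4/\delta)}{b}}\ +\ \sqrt{\frac{\mathcal{C}(\mathcal{G})+\log(4/\delta)}{a}},
\]
where $\mathcal{C}(\mathcal{G})$ is the relevant capacity term (a log-covering number of $\mathcal{G}$) and the factor $4$ in $\log(4/\delta)$ comes from splitting the failure probability across the two sample counts and the two capacity contributions by a union bound.

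The second step reuses the non-augmented bound already established in the proof of Theorem~\ref{thm:rad} with $n=a$, namely that $\epsilon_p$ is of order $\sqrt{(\mathcal{C}(\mathcal{G})+\log(1/\delta))/a}$. Comparing the two bounds term by term --- and, as in the earlier theorems, identifying $\epsilon_p$ and $\epsilon^*$ with the explicit expressions the convergence arguments furnish --- $\epsilon^*\le\epsilon_p$ amounts to requiring that the extra $1/\sqrt{b}$ piece of $\epsilon^*$ be absorbed into the (strictly larger) leading constant of the $1/\sqrt{a}$ piece of $\epsilon_p$. Clearing denominators turns this into an inequality of the form $b\,(c_1a-c_2)\ge a\log(4/\delta)$, i.e. exactly the stated condition $b\ge a\log(4/\delta)/(c_1a-c_2)$, with $c_1,c_2$ read off from the absolute constants in the two uniform-convergence bounds; the auxiliary requirement $a\ge c_3$ guarantees $c_1a-c_2>0$ and that the leading $1/\sqrt{a}$ terms dominate the lower-order corrections, so that the term-by-term comparison is valid.

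For the second conclusion I would argue exactly as in Theorems~\ref{thm:finite}--\ref{thm:rad}: the (two-sided) Baxter bound applied to $g^*$ gives $\mathcal{R}(f,g^*,p)\le\mathcal{R}_{emp}(f,g^*,\Xb)+\epsilon^*$; the hypothesized empirical-risk improvement gives $\mathcal{R}_{emp}(f,g^*,\Xb)\le\mathcal{R}_{emp}(f,g_p,\{\xb_i\}_{i=1}^n)$; the first part gives $\epsilon^*\le\epsilon_p$; and closing the loop with the non-augmented generalization bound relating $\mathcal{R}_{emp}(f,g_p,\{\xb_i\}_{i=1}^n)$ to $\mathcal{R}(f,g_p,p)$ yields $\mathcal{R}(f,g^*,p)\le\mathcal{R}(f,g_p,p)$.

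I expect the crux to be the first step: making the reduction to \cite{Baxter_2000} rigorous. Baxter's theorem controls the risk averaged over the environment of tasks, whereas the quantity we ultimately want is $\mathcal{R}(f,\cdot,p)$ for the single distribution $p$, so one must argue that $p$ is essentially part of the environment generated by the mixup construction and that the correlations --- both between $p$ and the mixup distribution $q$ and among the $b$ mixup points themselves --- do not inflate the capacity term. Getting the ``$a$ tasks, $\approx b/a$ examples each'' bookkeeping right (this is what makes both a $1/\sqrt{a}$ and a $1/\sqrt{b}$ term appear) and checking the hypotheses of Baxter's bound --- boundedness of $l$, permissibility of $\mathcal{G}$ --- in this instantiation is the real work; everything downstream is elementary manipulation of constants.
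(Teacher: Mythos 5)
Your outline matches the paper's proof in its broad strokes---invoke the multi-task uniform-convergence bound of \cite{Baxter_2000} to control the generalization gap of $g^*$ on the non-i.i.d.\ combined sample, extract a condition of the form $b \geq a\log(4/\delta)/(c_1 a - c_2)$ plus a lower bound on $a$, and then chain the empirical-risk hypothesis through the two gaps for the second conclusion. But the step you yourself flag as the crux---relating Baxter's environment-averaged risk to $\mathcal{R}(f,\cdot,p)$---is resolved in the paper in a way quite different from what you anticipate, and your plan as stated would not go through. The paper does \emph{not} argue that $p$ lies in the environment. Instead it first writes $\mathcal{R}(f,g^*,p)$ as $\mathcal{R}\bigl(f,g^*,\tfrac{ap+bq}{a+b}\bigr)$ plus the explicit correction $\tfrac{b}{a+b}\int l(f(\xb),g^*(\xb))(p(\xb)-q(\xb))\,d\xb \leq \triangle$, applies Baxter only to the mixture, and absorbs $\triangle$ into the constants (the condition on $b$ becomes $b \geq 64\log(4/\delta)/\bigl((\epsilon_p-\triangle)^2 - 64\log C(\mathcal{G})/a\bigr)$, which is the stated form with $c_1=(\epsilon_p-\triangle)^2/64$, $c_2=\log C(\mathcal{G})$). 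The task bookkeeping is also flipped relative to yours: the paper partitions the $a+b$ points into $\gamma=\lfloor(a+b)/a\rfloor$ disjoint blocks of size at least $a$ (so roughly $b/a$ tasks with $a$ samples each, not $a$ tasks with $b/a$ samples each), uses Baxter's Theorem 5 to bound $\log C(\mathcal{G}^\gamma)\leq\gamma\log C(\mathcal{G})$, and thereby places $\log(4/\delta)$ over $b$ and the capacity over $a$; the residual $16/a$ term in Baxter's bound is what forces the hypothesis $a\geq c_3$.

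A second, more consequential divergence: you propose to ``identify $\epsilon_p$ with the explicit expression the convergence argument furnishes'' and compare two upper bounds term by term. That proves only $\epsilon^*\leq\overline{\epsilon_p}$ for an upper bound $\overline{\epsilon_p}\geq\epsilon_p$, which is weaker than the claim, and moreover the bound you want to reuse from Theorem~\ref{thm:rad} is unavailable here---Theorem~\ref{thm:baxter} assumes only boundedness of $l$, not Lipschitzness, and does not assume $\mathcal{G}$ finite, so neither of the earlier uniform-convergence bounds on $\epsilon_p$ applies. The paper sidesteps this entirely by never bounding $\epsilon_p$: it treats $\epsilon_p$ as the realized minimal gap and shows directly that the augmented gap is at most $\epsilon_p$, at the price of letting the constants $c_1,c_2$ depend on $\epsilon_p$ and $\triangle$. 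If you keep your comparison-of-upper-bounds strategy you must either add hypotheses to control $\epsilon_p$ or accept the weaker conclusion; to prove the theorem as stated you should adopt the paper's device of building $\epsilon_p-\triangle$ into the constants.
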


\begin{remark}
For Theorem~\ref{thm:baxter} to hold, based on \cite{Baxter_2000}, it is enough to ensure $\{\xb_i, \xb^\prime_i\}$ and $\{\xb_j, \xb^\prime_j\}$ to be independent for $i\neq j$. We achieve this by constructing $\xb_i^\prime$ with $\xb_i$ and an extra random sample from the training data. Since all $(\xb_i, \xb_j)$ and the extra random samples are independent, the resulting concatenation will also be independent. 
\end{remark}

\vspace{-2mm}
\section{Experiments}
\vspace{-2mm}
We demonstrate the effectiveness of MixKD on a number of GLUE~\citep{wang2019glue} dataset tasks: Stanford Sentiment Treebank (SST-2)~\citep{socher2013recursive}, Microsoft Research Paraphrase Corpus (MRPC)~\citep{dolan2005automatically}, Quora Question Pairs (QQP)\footnote{\url{ data.quora.com/First-Quora-Dataset-Release-Question-Pairs}}, Multi-Genre Natural Language Inference (MNLI)~\citep{williams2018broad}, Question Natural Language Inference (QNLI)~\citep{rajpurkar2016squad}, and Recognizing Textual Entailment (RTE)~\citep{dagan2005pascal,haim2006second,giampiccolo2007third,bentivogli2009fifth}.
Note that MNLI contains both an in-domain (MNLI-m) and cross-domain (MNLI-mm) evaluation set.
These datasets span sentiment analysis, paraphrase similarity matching, and natural language inference types of tasks. 
We use the Hugging Face Transformers\footnote{\url{https://huggingface.co/transformers/}} implementation of BERT for our experiments.

\vspace{-2mm}
\subsection{Glue Dataset Evaluation}\label{sec:exp_glue}
\vspace{-2mm}
\begin{table}[t!]
\setlength{\tabcolsep}{6pt}
\def\arraystretch{1.06}
\centering
\begin{tabular}{c || c c c c c c} 
    \toprule[1.2pt]
 Model & SST-2 & MRPC & QQP & MNLI-m & QNLI & RTE\\ 
\hline
 $\mathrm{BERT}_{12}$ & 92.20 & 90.53/86.52 & 88.21/91.25 & 84.12 & 91.32 & 77.98 \\ 
 \hline
 $\mathrm{DistilBERT}_{6}$  & 91.3 & 87.5/82.4 & ----/88.5 & 82.2 & \textbf{89.2} & 59.9 \\ 

\hline
 $\mathrm{BERT}_6$-FT & 90.94 & 88.54/83.82 & 87.16/90.43 & 81.28 & 88.25 & 66.43 \\
 $\mathrm{BERT}_6$-TMKD & 91.63 & 88.93/83.82 &	86.60/90.27 & 81.49 & 88.71 & 65.34 \\ 
 $\mathrm{BERT}_6$-SM+TMKD & 91.17 & 89.30/84.31	& 87.19/90.56 & 82.02 &	88.63 & 65.34 \\
  
 $\mathrm{BERT}_6$-FT+BT & 91.74 & \textbf{89.60}/\textbf{84.80} & 87.06/90.39 &	82.10 & 87.68 & 67.51 \\
 $\mathrm{BERT}_6$-TMKD+BT & 91.86 & 89.52/84.56 & 87.15/90.59 & 82.17 & 88.38 & \textbf{69.98} \\ 
 $\mathrm{BERT}_6$-SM+TMKD+BT & \textbf{92.09} & 89.22/84.07 & \textbf{87.57}/\textbf{90.78} & \textbf{82.53} & \textbf{88.82} & 67.87 \\
  
\toprule[1.2pt]
 $\mathrm{BERT}_3$-FT & 87.16 & 81.68/71.08 & 84.99/88.65 & 75.55 & 83.98 & 58.48 \\
 $\mathrm{BERT}_3$-TMKD & 88.76 & 81.62/71.08 & 83.27/87.80 & 75.73 & 84.26 & 58.48 \\
 $\mathrm{BERT}_3$-SM+TMKD & 88.99 & 81.73/71.08 & 84.47/88.37 & 75.52 & 84.24 & 59.57 \\ 
 
 $\mathrm{BERT}_3$-FT+BT & 88.88 & 83.36/74.26 & 85.31/88.81 & 76.88 & 83.67 & 59.21 \\
 $\mathrm{BERT}_3$-TMKD+BT & 89.79 & \textbf{84.46}/\textbf{75.74} & 85.17/89.00 & 77.19 & 84.68 & \textbf{62.82} \\
 $\mathrm{BERT}_3$-SM+TMKD+BT & \textbf{90.37} & 84.14/75.25 & \textbf{85.56}/\textbf{89.09} & \textbf{77.52} & \textbf{84.83} & 60.65 \\
 \bottomrule[1.2pt]
\end{tabular}
\caption{GLUE dev set results. 
We report the results of our $\BERT_{12}$ teacher model, the 6-layer DistilBERT, and 3- and 6-layer \emph{MixKD} student models with various ablations.
DistilBERT results taken from \cite{sanh2019distilbert}.
For MRPC and QQP, we report F1/Accuracy.
}
\vspace{-5mm}
\label{tab:glue_dev}
\end{table}

We first analyze the contributions of each component of our method, evaluating on the dev set of the GLUE datasets.
For the teacher model, we fine-tune a separate 12 Transformer-layer \texttt{bert-base-uncased} ($\mathrm{BERT}_{12}$) for each task. 
We use the smaller $\mathrm{BERT}_{3}$ and $\mathrm{BERT}_{6}$ as the student model.
We find that initializing the embeddings and Transformer layers of the student model from the first $k$ layers of the teacher model provides a significant boost to final performance. 
We use MSE as the knowledge distillation distance metric $d(\cdot, \cdot)$.
We generate one mixup sample for each original sample in each minibatch (mixup ratio of 1), with $\lambda \sim \mathrm{Beta}(0.4, 0.4)$.
We set hyperparameters weighting the components in the loss term in \eqref{eq:loss} as $\alpha_{\mathrm{SM}} = \alpha_{\mathrm{TMKD}} = 1$.

As a baseline, we fine-tune the student model on the task dataset without any distillation or augmentation, which we denote as $\BERT_k$-FT.
We compare this against \emph{MixKD}, with both knowledge distillation on the teacher's predictions ($\mathcal L_{\mathrm{TMKD}}$) and mixup for the student ($\mathcal L_{\mathrm{SM}}$), which we call $\BERT_k$-SM+TMKD.
We also evaluate an ablated version without the student mixup loss ($\BERT_k$-TMKD) to highlight the knowledge distillation component specifically.
We note that our method can also easily be combined with other forms of data augmentation.
For example, backtranslation (translating an input sequence to the data space of another language and then translating back to the original language) tends to generate varied but semantically similar sequences; these sentences also tend to be of higher quality than masking or word-dropping approaches.
We show that our method has an additive effect with other techniques by also testing our method with the dataset augmented with German backtranslation, using the \texttt{fairseq}~\citep{ott2019fairseq} neural machine translation codebase to generate these additional samples.
We also compare all of the aforementioned variants with backtranslation samples augmenting the data; we denote these variants with an additional +BT. 

\begin{wraptable}{R}{0.5\textwidth}
	\vspace{-3mm}
	\setlength{\tabcolsep}{8pt}
    \resizebox{0.5\textwidth}{!}{
	\begin{tabular}{c||c c}
		\toprule[1.2pt]
		Model & Inference Speed (samples/second) & \# of Parameters \\
		\hline
		$\mathrm{BERT}_{12}$ Teacher & 115 & 109,483,778\\
		\hline
		$\mathrm{BERT}_6$ Student & 252 & 66,956,546\\
		$\mathrm{BERT}_3$ Student & 397 & 45,692,930\\
		\bottomrule[1.2pt]
	\end{tabular}
	}
	\caption{Computation cost comparison of teacher and student models on SST-2 with batch size of 16 on a Nvidia TITAN X GPU.}\label{tab:comp_cost}
	\vspace{-2mm}
\end{wraptable}
We report the model accuracy (and $F_1$ score, for MRPC and QQP) in Table~\ref{tab:glue_dev}.
We also show the performance of the full-scale teacher model ($\BERT_{12}$) and DistilBERT~\citep{sanh2019distilbert}, which performs basic knowledge distillation during BERT pre-training to a 6-layer model.
For our method, we observe that a combination of data augmentation and knowledge distillation leads to significant gains in performance, with the best variant often being the combination of teacher mixup knowledge distillation, student mixup, and backtranslation.
In the case of  SST-2, for example, $\BERT_6$-SM+TMKD+BT is able to capture $99.88\%$ of the performance of the teacher model, closing $91.27\%$ of the gap between the fine-tuned student model and the teacher, despite using far fewer parameters and having a much faster inference speed (Table~\ref{tab:comp_cost}).

After analyzing the contributions of the components of our model on the dev set, we find the SM+TMKD+BT variant to have the best performance overall and thus focus on this variant.
We submit this version of \emph{MixKD} to the GLUE test server, reporting its results in comparison with fine-tuning (FT), vanilla knowledge distillation (KD)~\citep{hinton2015distilling}, and patient knowledge distillation (PKD)~\citep{sun2019patient} in Table~\ref{tab:glue_test}.
Once again, we observe that our model outperforms the baseline methods on most tasks.

\begin{table}[t!]
\begin{small}
\centering
\setlength{\tabcolsep}{6pt}
\def\arraystretch{1.05}
\resizebox{\textwidth}{!}{
\begin{tabular}{c || c c c c c c c} 
    \toprule[1.2pt]
 Model & SST-2 & MRPC & QQP & MNLI-m & MNLI-mm & QNLI & RTE\\ 
\hline
 BERT$_{12}$ & 93.5 & 88.9/84.8 & 71.2/89.2 & 84.6 & 83.4 & 90.5 & 66.4 \\ 
 \hline
 BERT$_6$-FT & 90.7	& 85.9/80.2	& 69.2/88.2	& 80.4	& 79.7	& 86.7	& 63.6 \\
 BERT$_6$-KD & 91.5 &	86.2/80.6 &	70.1/88.8 &	80.2 &	79.8 &	88.3 &	64.7 \\
 BERT$_6$-PKD & 92.0 & 85.0/79.9 &	\textbf{70.7}/88.9 & 81.5 &	81.0 &	\textbf{89.0} &	65.5 \\
 \hline
 BERT$_6$-\emph{MixKD} & \textbf{92.5} & \textbf{86.4/81.9} & 70.5/\textbf{89.1} & \textbf{82.2} & \textbf{81.2} & 88.2 & \textbf{68.3} \\ 
 \toprule[1.2pt]
 BERT$_3$-FT &  86.4 &	80.5/72.6 &	65.8/86.9 &	74.8 &	74.3 &	84.3 &	55.2\\
 BERT$_3$-KD & 86.9 &	79.5/71.1 &	67.3/87.6 &	75.4 &	74.8 &	84.0 &	56.2\\
 BERT$_3$-PKD & 87.5 & 80.7/72.5 & \textbf{68.1/87.8} & 76.7 & 76.3 & \textbf{84.7} & 58.2 \\ 
 \hline
  BERT$_3$-\emph{MixKD} & \textbf{89.5} & \textbf{83.3}/\textbf{75.2} & 67.2/87.4 & \textbf{77.2} & \textbf{76.8} & 84.4 & \textbf{62.0} \\ 
 \bottomrule[1.2pt]
\end{tabular}
}
\caption{GLUE test server results.
We show results for the full variants of the 3- and 6-layer \emph{MixKD} student models (SM+TMKD+BT).
Knowledge distillation (KD) and Patient Knowledge Distillation (PKD) results are from \cite{sun2019patient}. 
}
\label{tab:glue_test}
\end{small}
\end{table}

\vspace{-2mm}
\subsection{Limited-Data Settings}
\vspace{-2mm}
One of the primary motivations for using data augmentation for knowledge distillation is to give the student more opportunities to query the teacher model.
For datasets with a large enough number of samples relative to the task's complexity, the original dataset may provide enough chances to learn from the teacher, reducing the relative value of data augmentation.

As such, we also evaluate \emph{MixKD} with a $\BERT_3$ student on downsampled versions of QQP, MNLI (matched and mismatched), and QNLI in Figure~\ref{fig:limited}.
We randomly select $10\%$ and $1\%$ of the data from these datasets to train both the teacher and student models, using the same subset for all experiments for fair comparison. 
In this data limited setting, we observe substantial gains from \emph{MixKD} over the fine-tuned model for QQP ($+2.0\%$, $+3.0\%$), MNLI-m ($+3.9\%$, $+3.4\%$), MNLI-mm ($+4.4\%$, $+3.3\%$), and QNLI ($+2.4\%$, $+4.1\%$) for $10\%$ and $1\%$ of the training data.

\begin{figure*}[t]
  \vspace{-2mm}
  \subfigure{%
      \includegraphics[width=0.45\textwidth]{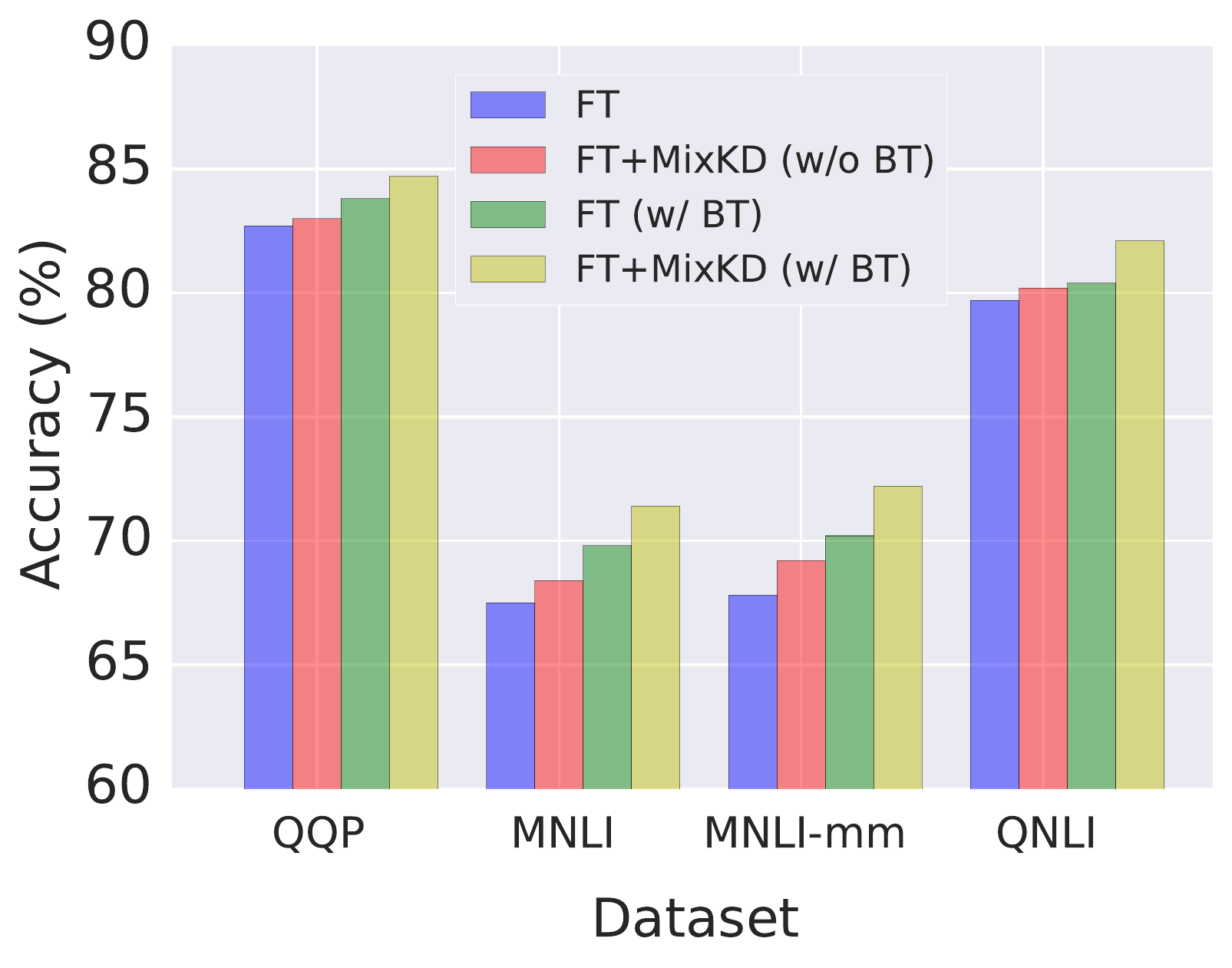}}
\hspace{\fill}
  \subfigure{%
      \includegraphics[width=0.45\textwidth]{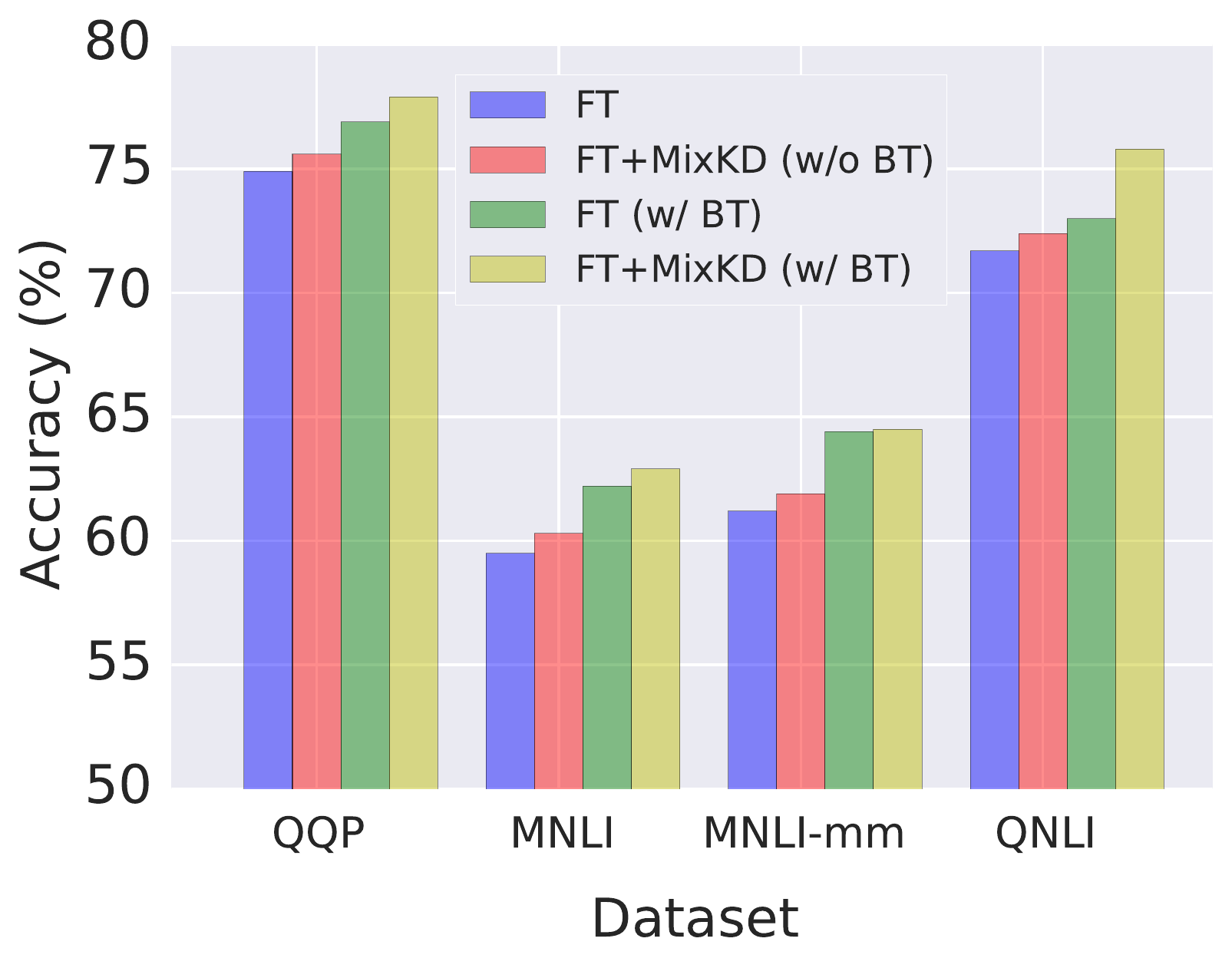}}    
\hspace{\fill}
\vspace{-2mm}
\caption{\label{fig:limited} Results of limited data case, where both the teacher and student models are learned with only 10\% (\underline{left}) or 1\% of the training data (\underline{right}).}
\vspace{-2mm}
\end{figure*}

\vspace{-2mm}
\subsection{Embeddings Visualization}
\vspace{-2mm}
\begin{wrapfigure}{r}{0.5\textwidth}
  \centering
  \vspace{-4mm}
  \subfigure[\label{fig:plain}]{%
      \includegraphics[width=0.245\textwidth]{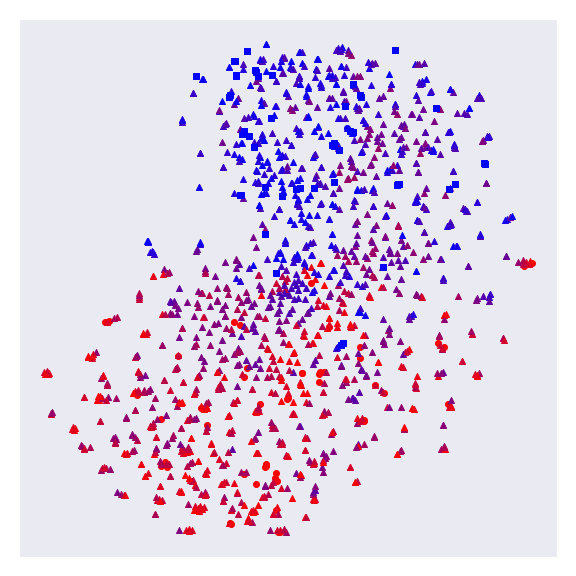}}
  \subfigure[\label{fig:plainda}]{%
      \includegraphics[width=0.245\textwidth]{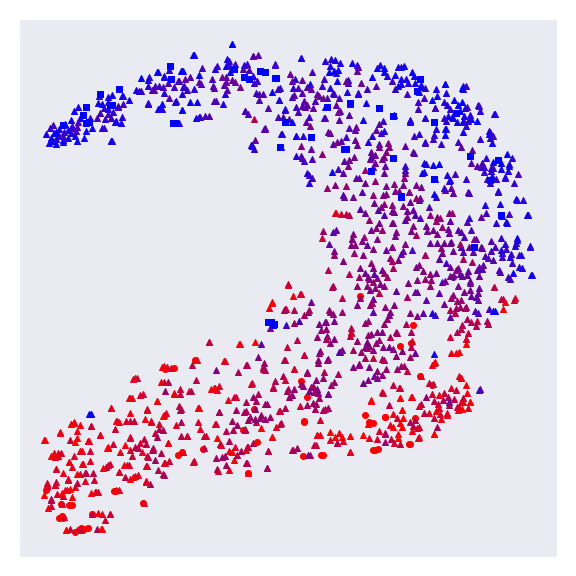}} 
\vspace{-4mm}
\caption{\label{fig:embvis} Latent space of randomly sampled training data and their mixup neighbours encoded by student model (a) learned by standard fine-tuning (b) learned with \emph{MixKD}.}
\vspace{-4mm}
\end{wrapfigure}
We perform a qualitative examination of the effect of the proposed \emph{MixKD} by visualizing the latent space between positive and negative samples as encoded by the student model with t-SNE plots~\citep{maaten2008visualizing}. 
In Figure~\ref{fig:embvis}, we show the shift of the transformer features at the [CLS] token position, with and without mixup data augmentation from the teacher.
We randomly select a batch of 100 sentences from the SST-2 dataset, of which 50 are positive sentiment (blue square) and 50 are negative sentiment (red circle). 
The intermediate mixup neighbours are indicated by triangles with color determined by the closeness to the positive group or negative group. 
From Figure~\ref{fig:plain} to  Figure~\ref{fig:plainda}, \emph{MixKD} forces the linearly interpolated samples to be aligned with the manifold formed by the real training data and leads the student model to explore meaningful regions of the feature space effectively.     

\vspace{-2mm}
\subsection{Hyperparameter Sensitivity \& Further Analysis}
\vspace{-2mm}
\textbf{Loss Hyperparameters } Our final objective in equation~\ref{eq:loss} has hyperparameters $\alpha_{\mathrm{SM}}$ and $\alpha_{\mathrm{TMKD}}$, which control the weight of the student model's cross-entropy loss for the mixup samples and the knowledge distillation loss with the teacher's predictions on the mixup samples, respectively.
We demonstrate that the model is fairly stable over a wide range by sweeping both $\alpha_{\mathrm{SM}}$ and $\alpha_{\mathrm{TMKD}}$ over the range $\{0.1, 0.5, 1.0, 2.0, 10.0\}$.
We do this for a $\BERT_3$ student and $\BERT_{12}$ teacher, with SST-2 as the task; we show the results of this sensitivity study, both with and without German backtranslation, in Figure~\ref{fig:hyperparameter}. 
Given the overall consistency, we observe that our method is stable over a wide range of settings.

\begin{figure*}
	\centering
	\includegraphics[width=1.0\textwidth]{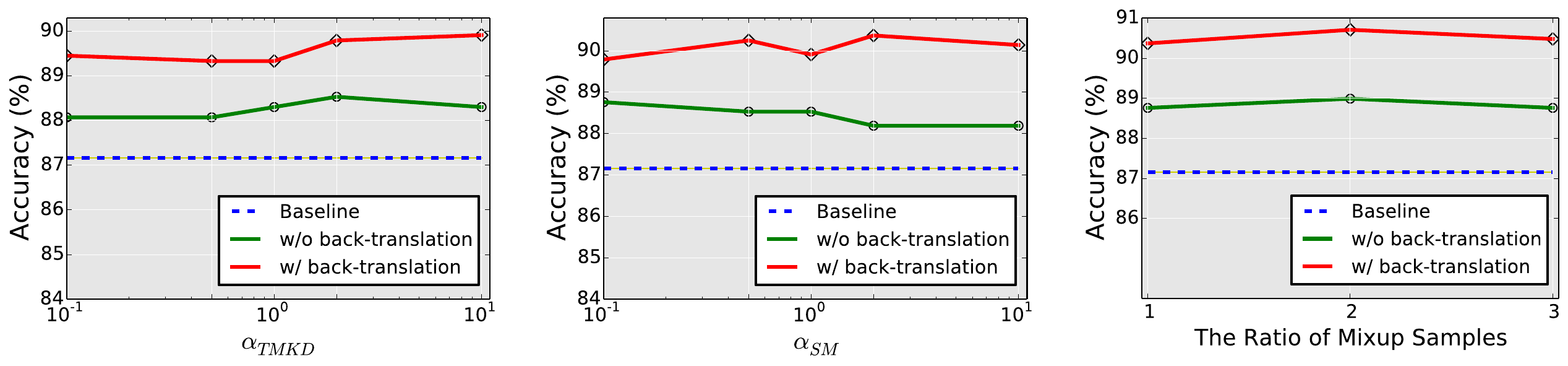} 
	\caption{Hyperparameter sensitivity analysis regarding the \emph{MixKD} approach, with different choices of $\alpha_{\text{TMKD}}, \alpha_{\text{SM}}$ and the ratio of mixup samples (\emph{w.r.t.} the original training data).} 
	\label{fig:hyperparameter}
	\vspace{-2mm}
\end{figure*}

\textbf{Mixup Ratio} We also investigate the effect of the mixup ratio: the number of mixup samples generated for each sample in a minibatch.
We run a smaller sweep of $\alpha_{\mathrm{SM}}$ and $\alpha_{\mathrm{TMKD}}$ over the range $\{0.5, 1.0, 2.0\}$ for mixup ratios of 2 and 3 for a $\BERT_3$ student SST-2, with and without German backtranslation, in Figure~\ref{fig:hyperparameter}. 
We conclude that the mixup ratio does not have a strong effect on overall performance.
Given that higher mixup ratio requires more computation (due to more samples over which to compute the forward and backward pass), we find a mixup ratio of 1 to be enough. 

\paragraph{Comparing with TinyBERT's DA module}
\begin{wraptable}{R}{0.5\textwidth}
	\vspace{-1mm}
	\setlength{\tabcolsep}{8pt}
    \resizebox{0.5\textwidth}{!}{
	\begin{tabular}{c||c c}
		\toprule[1.2pt]
		\textbf{Methods}  & \textbf{MNLI} & \textbf{SST-2} \\
		\hline
		$\mathrm{BERT}_6$ &81.3 & 90.9\\
		\hline
		$\mathrm{BERT}_6$ + TinyBERT DA module &81.5 & 91.3\\
		$\mathrm{BERT}_6$ + \emph{MixKD} & \textbf{82.5} & \textbf{92.1}\\
		\bottomrule[1.2pt]
	\end{tabular}
	}
	\caption{We compare our approach with the data augmentation module proposed by TinyBert \citep{jiao2019tinybert}. }\label{tab:da}
	\vspace{-2mm}
\end{wraptable}
TinyBERT \citep{jiao2019tinybert} also utilizes data augmentation for knowledge distillation. 
Specifically, they adopt a conditional BERT contextual augmentation \citep{wu2019conditional} strategy.
To further verify the effectiveness of our approach, we use TinyBERT's released codebase\footnote{\url{https://github.com/huawei-noah/Pretrained-Language-Model/tree/master/TinyBERT}} to generate augmented samples and make a direct comparison with \emph{MixKD}. As shown in Table~\ref{tab:da}, our approach exhibits much stronger results for distilling a $6$-layer BERT model (on both MNLI and SST-2 datasets). 
Notably, TinyBERT's data augmentation module is much less efficient than mixup's simple operation, generating $20$ times the original data as augmented samples, thus leading to massive computation overhead.

\vspace{-2mm}
\section{Conclusions}
\vspace{-2mm}
We introduce \emph{MixKD}, a method that uses data augmentation to significantly increase the value of knowledge distillation for compressing large-scale language models.
Intuitively, \emph{MixKD} allows the student model additional queries to the teacher model, granting it more opportunities to absorb the latter's richer representations.
We analyze \emph{MixKD} from a theoretical standpoint, proving that our approach results in a smaller gap between generalization error and empirical error, as well as better generalization, under appropriate conditions.
Our approach's success on a variety of GLUE tasks demonstrates its broad applicability, with a thorough set of experiments for validation.
We also believe that the \emph{MixKD} framework can further reduce the gap between student and teacher models with the incorporation of more recent mixup and knowledge distillation techniques~\citep{lee2020adversarial, wang2020neural, mirzadeh2019improved}, and we leave this to future work.

\subsubsection*{Acknowledgments}
CC is partly supported by the Verizon Media FREP program.

\bibliography{refs}

\begin{thebibliography}{57}
\providecommand{\natexlab}[1]{#1}
\providecommand{\url}[1]{\texttt{#1}}
\expandafter\ifx\csname urlstyle\endcsname\relax
  \providecommand{\doi}[1]{doi: #1}\else
  \providecommand{\doi}{doi: \begingroup \urlstyle{rm}\Url}\fi

\bibitem[Bao et~al.(2020)Bao, Dong, Wei, Wang, Yang, Liu, Wang, Piao, Gao,
  Zhou, et~al.]{bao2020unilmv2}
Hangbo Bao, Li~Dong, Furu Wei, Wenhui Wang, Nan Yang, Xiaodong Liu, Yu~Wang,
  Songhao Piao, Jianfeng Gao, Ming Zhou, et~al.
\newblock {UniLMv2: Pseudo-masked Language Models for Unified Language Model
  Pre-training}.
\newblock \emph{arXiv preprint arXiv:2002.12804}, 2020.

\bibitem[Baxter(2000)]{Baxter_2000}
J.~Baxter.
\newblock {A Model of Inductive Bias Learning}.
\newblock \emph{Journal of Artificial Intelligence Research}, 2000.

\bibitem[Bentivogli et~al.(2009)Bentivogli, Clark, Dagan, and
  Giampiccolo]{bentivogli2009fifth}
Luisa Bentivogli, Peter Clark, Ido Dagan, and Danilo Giampiccolo.
\newblock {The Fifth PASCAL Recognizing Textual Entailment Challenge}.
\newblock \emph{TAC}, 2009.

\bibitem[Berthelot et~al.(2019{\natexlab{a}})Berthelot, Carlini, Cubuk,
  Kurakin, Sohn, Zhang, and Raffel]{berthelot2019remixmatch}
David Berthelot, Nicholas Carlini, Ekin~D Cubuk, Alex Kurakin, Kihyuk Sohn, Han
  Zhang, and Colin Raffel.
\newblock {ReMixMatch: Semi-Supervised Learning with Distribution Alignment and
  Augmentation Anchoring}.
\newblock \emph{International Conference on Learning Representations},
  2019{\natexlab{a}}.

\bibitem[Berthelot et~al.(2019{\natexlab{b}})Berthelot, Carlini, Goodfellow,
  Papernot, Oliver, and Raffel]{berthelot2019mixmatch}
David Berthelot, Nicholas Carlini, Ian Goodfellow, Nicolas Papernot, Avital
  Oliver, and Colin~A Raffel.
\newblock {MixMatch: A Holistic Approach to Semi-Supervised Learning}.
\newblock \emph{Neural Information Processing Systems}, 2019{\natexlab{b}}.

\bibitem[Chen et~al.(2020)Chen, Yang, and Yang]{chen-etal-2020-mixtext}
Jiaao Chen, Zichao Yang, and Diyi Yang.
\newblock {MixText: Linguistically-Informed Interpolation of Hidden Space for
  Semi-Supervised Text Classification}.
\newblock \emph{Association for Computational Linguistics}, July 2020.

\bibitem[Clark et~al.(2019)Clark, Luong, Khandelwal, Manning, and
  Le]{clark2019bam}
Kevin Clark, Minh-Thang Luong, Urvashi Khandelwal, Christopher~D Manning, and
  Quoc~V Le.
\newblock {BAM! Born-again Multi-task Networks for Natural Language
  Understanding}.
\newblock \emph{arXiv preprint arXiv:1907.04829}, 2019.

\bibitem[Clark et~al.(2020)Clark, Luong, Le, and Manning]{clark2020electra}
Kevin Clark, Minh-Thang Luong, Quoc~V Le, and Christopher~D Manning.
\newblock {ELECTRA: Pre-training Text Encoders as Discriminators Rather than
  Generators}.
\newblock \emph{arXiv preprint arXiv:2003.10555}, 2020.

\bibitem[Dagan et~al.(2005)Dagan, Glickman, and Magnini]{dagan2005pascal}
Ido Dagan, Oren Glickman, and Bernardo Magnini.
\newblock {The PASCAL Recognising Textual Entailment Challenge}.
\newblock \emph{Machine Learning Challenges Workshop}, 2005.

\bibitem[Devlin et~al.(2018)Devlin, Chang, Lee, and Toutanova]{devlin2018bert}
Jacob Devlin, Ming-Wei Chang, Kenton Lee, and Kristina Toutanova.
\newblock {BERT: Pre-training of Deep Bidirectional Transformers for Language
  Understanding}.
\newblock \emph{arXiv preprint arXiv:1810.04805}, 2018.

\bibitem[Dolan \& Brockett(2005)Dolan and Brockett]{dolan2005automatically}
William~B Dolan and Chris Brockett.
\newblock {Automatically Constructing a Corpus of Sentential Paraphrases}.
\newblock \emph{International Workshop on Paraphrasing}, 2005.

\bibitem[Giampiccolo et~al.(2007)Giampiccolo, Magnini, Dagan, and
  Dolan]{giampiccolo2007third}
Danilo Giampiccolo, Bernardo Magnini, Ido Dagan, and Bill Dolan.
\newblock {The Third PASCAL Recognizing Textual Entailment Challenge}.
\newblock \emph{ACL-PASCAL Workshop on Textual Entailment and Paraphrasing},
  2007.

\bibitem[Guo et~al.(2019)Guo, Mao, and Zhang]{guo2019augmenting}
Hongyu Guo, Yongyi Mao, and Richong Zhang.
\newblock {Augmenting Data with Mixup for Sentence Classification: An Empirical
  Study}.
\newblock \emph{arXiv preprint arXiv:1905.08941}, 2019.

\bibitem[Haim et~al.(2006)Haim, Dagan, Dolan, Ferro, Giampiccolo, Magnini, and
  Szpektor]{haim2006second}
R~Bar Haim, Ido Dagan, Bill Dolan, Lisa Ferro, Danilo Giampiccolo, Bernardo
  Magnini, and Idan Szpektor.
\newblock {The Second PASCAL Recognising Textual Entailment Challenge}.
\newblock \emph{PASCAL Challenges Workshop on Recognising Textual Entailment},
  2006.

\bibitem[Hendrycks et~al.(2020)Hendrycks, Mu, Cubuk, Zoph, Gilmer, and
  Lakshminarayanan]{hendrycks2020augmix}
Dan Hendrycks, Norman Mu, Ekin~Dogus Cubuk, Barret Zoph, Justin Gilmer, and
  Balaji Lakshminarayanan.
\newblock {AugMix: A Simple Method to Improve Robustness and Uncertainty under
  Data Shift}.
\newblock \emph{International Conference on Learning Representations}, 2020.

\bibitem[Hinton et~al.(2015)Hinton, Vinyals, and Dean]{hinton2015distilling}
Geoffrey Hinton, Oriol Vinyals, and Jeff Dean.
\newblock {Distilling the Knowledge in a Neural Network}.
\newblock \emph{arXiv preprint arXiv:1503.02531}, 2015.

\bibitem[Jiao et~al.(2019)Jiao, Yin, Shang, Jiang, Chen, Li, Wang, and
  Liu]{jiao2019tinybert}
Xiaoqi Jiao, Yichun Yin, Lifeng Shang, Xin Jiang, Xiao Chen, Linlin Li, Fang
  Wang, and Qun Liu.
\newblock {TinyBERT: Distilling BERT for Natural Language Understanding}.
\newblock \emph{arXiv preprint arXiv:1909.10351}, 2019.

\bibitem[Joshi et~al.(2020)Joshi, Chen, Liu, Weld, Zettlemoyer, and
  Levy]{joshi2020spanbert}
Mandar Joshi, Danqi Chen, Yinhan Liu, Daniel~S Weld, Luke Zettlemoyer, and Omer
  Levy.
\newblock {SpanBERT: Improving Pre-training by Representing and Predicting
  Spans}.
\newblock \emph{Association for Computational Linguistics}, 2020.

\bibitem[Krizhevsky et~al.(2012)Krizhevsky, Sutskever, and
  Hinton]{krizhevsky2012imagenet}
Alex Krizhevsky, Ilya Sutskever, and Geoffrey~E Hinton.
\newblock {ImageNet Classification with Deep Convolutional Neural Networks}.
\newblock \emph{Neural Information Processing Systems}, 2012.

\bibitem[Kumar et~al.(2019)Kumar, Bhattamishra, Bhandari, and
  Talukdar]{kumar2019submodular}
Ashutosh Kumar, Satwik Bhattamishra, Manik Bhandari, and Partha Talukdar.
\newblock {Submodular Optimization-based Diverse Paraphrasing and Its
  Effectiveness in Data Augmentation}.
\newblock \emph{North American Chapter of the Association for Computational
  Linguistics: Human Language Technologies}, 2019.

\bibitem[Lee et~al.(2020)Lee, Lee, and Yoon]{lee2020adversarial}
Saehyung Lee, Hyungyu Lee, and Sungroh Yoon.
\newblock {Adversarial Vertex Mixup: Toward Better Adversarially Robust
  Generalization}.
\newblock \emph{Computer Vision and Pattern Recognition}, 2020.

\bibitem[Lewis et~al.(2019)Lewis, Liu, Goyal, Ghazvininejad, Mohamed, Levy,
  Stoyanov, and Zettlemoyer]{lewis2019bart}
Mike Lewis, Yinhan Liu, Naman Goyal, Marjan Ghazvininejad, Abdelrahman Mohamed,
  Omer Levy, Ves Stoyanov, and Luke Zettlemoyer.
\newblock {BART: Denoising Sequence-to-sequence Pre-training for Natural
  Language Generation, Translation, and Comprehension}.
\newblock \emph{arXiv preprint arXiv:1910.13461}, 2019.

\bibitem[Liu et~al.(2019{\natexlab{a}})Liu, Wang, Lin, Socher, and
  Xiong]{liu2019attentive}
Linqing Liu, Huan Wang, Jimmy Lin, Richard Socher, and Caiming Xiong.
\newblock {Attentive Student Meets Multi-task Teacher: Improved Knowledge
  Distillation for Pretrained Models}.
\newblock \emph{arXiv preprint arXiv:1911.03588}, 2019{\natexlab{a}}.

\bibitem[Liu et~al.(2019{\natexlab{b}})Liu, He, Chen, and Gao]{liu2019multi}
Xiaodong Liu, Pengcheng He, Weizhu Chen, and Jianfeng Gao.
\newblock {Multi-task Deep Neural Networks for Natural Language Understanding}.
\newblock \emph{arXiv preprint arXiv:1901.11504}, 2019{\natexlab{b}}.

\bibitem[Liu et~al.(2019{\natexlab{c}})Liu, Ott, Goyal, Du, Joshi, Chen, Levy,
  Lewis, Zettlemoyer, and Stoyanov]{liu2019roberta}
Yinhan Liu, Myle Ott, Naman Goyal, Jingfei Du, Mandar Joshi, Danqi Chen, Omer
  Levy, Mike Lewis, Luke Zettlemoyer, and Veselin Stoyanov.
\newblock {RoBERTa: A Robustly Optimized BERT Pretraining Approach}.
\newblock \emph{arXiv preprint arXiv:1907.11692}, 2019{\natexlab{c}}.

\bibitem[Maaten \& Hinton(2008)Maaten and Hinton]{maaten2008visualizing}
Laurens van~der Maaten and Geoffrey Hinton.
\newblock {Visualizing Data using t-SNE}.
\newblock \emph{Journal of Machine Learning Research}, 2008.

\bibitem[Mirzadeh et~al.(2019)Mirzadeh, Farajtabar, Li, Levine, Matsukawa, and
  Ghasemzadeh]{mirzadeh2019improved}
Seyed-Iman Mirzadeh, Mehrdad Farajtabar, Ang Li, Nir Levine, Akihiro Matsukawa,
  and Hassan Ghasemzadeh.
\newblock {Improved Knowledge Distillation via Teacher Assistant}.
\newblock \emph{arXiv preprint arXiv:1902.03393}, 2019.

\bibitem[Mohri et~al.(2018)Mohri, Rostamizadeh, and
  Talwalkar]{mohri2018foundations}
Mehryar Mohri, Afshin Rostamizadeh, and Ameet Talwalkar.
\newblock \emph{{Foundations of Machine Learning}}.
\newblock 2018.

\bibitem[Mukherjee \& Awadallah(2019)Mukherjee and
  Awadallah]{mukherjee2019distilling}
Subhabrata Mukherjee and Ahmed~Hassan Awadallah.
\newblock {Distilling Transformers into Simple Neural Networks with Unlabeled
  Transfer Data}.
\newblock \emph{arXiv preprint arXiv:1910.01769}, 2019.

\bibitem[Ott et~al.(2019)Ott, Edunov, Baevski, Fan, Gross, Ng, Grangier, and
  Auli]{ott2019fairseq}
Myle Ott, Sergey Edunov, Alexei Baevski, Angela Fan, Sam Gross, Nathan Ng,
  David Grangier, and Michael Auli.
\newblock {fairseq: A Fast, Extensible Toolkit for Sequence Modeling}.
\newblock \emph{North American Chapter of the Association for Computational
  Linguistics: Human Language Technologies: Demonstrations}, 2019.

\bibitem[Qu et~al.(2020)Qu, Shen, Shen, Sajeev, Han, and Chen]{qu2020coda}
Yanru Qu, Dinghan Shen, Yelong Shen, Sandra Sajeev, Jiawei Han, and Weizhu
  Chen.
\newblock {CoDA: Contrast-enhanced and Diversity-promoting Data Augmentation
  for Natural Language Understanding}.
\newblock \emph{arXiv preprint arXiv:2010.08670}, 2020.

\bibitem[Rajpurkar et~al.(2016)Rajpurkar, Zhang, Lopyrev, and
  Liang]{rajpurkar2016squad}
Pranav Rajpurkar, Jian Zhang, Konstantin Lopyrev, and Percy Liang.
\newblock {SQuAD: 100,000+ Questions for Machine Comprehension of Text}.
\newblock \emph{Empirical Methods in Natural Language Processing}, 2016.

\bibitem[Sanh et~al.(2019)Sanh, Debut, Chaumond, and Wolf]{sanh2019distilbert}
Victor Sanh, Lysandre Debut, Julien Chaumond, and Thomas Wolf.
\newblock {DistilBERT, A Distilled Version of BERT: Smaller, Faster, Cheaper
  and Lighter}.
\newblock \emph{arXiv preprint arXiv:1910.01108}, 2019.

\bibitem[Shen et~al.(2020)Shen, Zheng, Shen, Qu, and Chen]{shen2020simple}
Dinghan Shen, Mingzhi Zheng, Yelong Shen, Yanru Qu, and Weizhu Chen.
\newblock {A Simple but Tough-to-Beat Data Augmentation Approach for Natural
  Language Understanding and Generation}.
\newblock \emph{arXiv preprint arXiv:2009.13818}, 2020.

\bibitem[Simard et~al.(1998)Simard, LeCun, Denker, and
  Victorri]{simard1998transformation}
Patrice~Y Simard, Yann~A LeCun, John~S Denker, and Bernard Victorri.
\newblock {Transformation Invariance in Pattern Recognition--Tangent Distance
  and Tangent Propagation}.
\newblock \emph{Neural Networks: Tricks of the Trade}, 1998.

\bibitem[Simard et~al.(2003)Simard, Steinkraus, and Platt]{simard2003best}
PY~Simard, D~Steinkraus, and JC~Platt.
\newblock {Best Practices for Convolutional Neural Networks Applied to Visual
  Document Analysis}.
\newblock \emph{International Conference on Document Analysis and Recognition},
  2003.

\bibitem[Socher et~al.(2013)Socher, Perelygin, Wu, Chuang, Manning, Ng, and
  Potts]{socher2013recursive}
Richard Socher, Alex Perelygin, Jean Wu, Jason Chuang, Christopher~D Manning,
  Andrew~Y Ng, and Christopher Potts.
\newblock {Recursive Deep Models for Semantic Compositionality over a Sentiment
  Treebank}.
\newblock \emph{Empirical Methods in Natural Language Processing}, 2013.

\bibitem[Sun et~al.(2019{\natexlab{a}})Sun, Cheng, Gan, and
  Liu]{sun2019patient}
Siqi Sun, Yu~Cheng, Zhe Gan, and Jingjing Liu.
\newblock {Patient Knowledge Distillation for BERT Model Compression}.
\newblock \emph{arXiv preprint arXiv:1908.09355}, 2019{\natexlab{a}}.

\bibitem[Sun et~al.(2019{\natexlab{b}})Sun, Wang, Li, Feng, Chen, Zhang, Tian,
  Zhu, Tian, and Wu]{sun2019ernie}
Yu~Sun, Shuohuan Wang, Yukun Li, Shikun Feng, Xuyi Chen, Han Zhang, Xin Tian,
  Danxiang Zhu, Hao Tian, and Hua Wu.
\newblock {ERNIE: Enhanced Representation through Knowledge Integration}.
\newblock \emph{arXiv preprint arXiv:1904.09223}, 2019{\natexlab{b}}.

\bibitem[Sun et~al.(2020)Sun, Yu, Song, Liu, Yang, and Zhou]{sun2020mobilebert}
Zhiqing Sun, Hongkun Yu, Xiaodan Song, Renjie Liu, Yiming Yang, and Denny Zhou.
\newblock {MobileBERT: A Compact Task-agnostic BERT for Resource-limited
  Devices}.
\newblock \emph{arXiv preprint arXiv:2004.02984}, 2020.

\bibitem[Tang et~al.(2019)Tang, Lu, Liu, Mou, Vechtomova, and
  Lin]{tang2019distilling}
Raphael Tang, Yao Lu, Linqing Liu, Lili Mou, Olga Vechtomova, and Jimmy Lin.
\newblock {Distilling Task-specific Knowledge from BERT into Simple Neural
  Networks}.
\newblock \emph{arXiv preprint arXiv:1903.12136}, 2019.

\bibitem[Turc et~al.(2019)Turc, Chang, Lee, and Toutanova]{turc2019well}
Iulia Turc, Ming-Wei Chang, Kenton Lee, and Kristina Toutanova.
\newblock {Well-read Students Learn Better: On the Importance of Pre-training
  Compact Models}.
\newblock \emph{arXiv preprint arXiv:1908.08962}, 2019.

\bibitem[Vaswani et~al.(2017)Vaswani, Shazeer, Parmar, Uszkoreit, Jones, Gomez,
  Kaiser, and Polosukhin]{vaswani2017attention}
Ashish Vaswani, Noam Shazeer, Niki Parmar, Jakob Uszkoreit, Llion Jones,
  Aidan~N Gomez, {\L}ukasz Kaiser, and Illia Polosukhin.
\newblock {Attention is All You Need}.
\newblock \emph{Neural Information Processing Systems}, 2017.

\bibitem[Verma et~al.(2019{\natexlab{a}})Verma, Lamb, Beckham, Najafi,
  Mitliagkas, Lopez-Paz, and Bengio]{verma2019manifold}
Vikas Verma, Alex Lamb, Christopher Beckham, Amir Najafi, Ioannis Mitliagkas,
  David Lopez-Paz, and Yoshua Bengio.
\newblock {Manifold Mixup: Better Representations by Interpolating Hidden
  States}.
\newblock \emph{International Conference on Machine Learning},
  2019{\natexlab{a}}.

\bibitem[Verma et~al.(2019{\natexlab{b}})Verma, Lamb, Kannala, Bengio, and
  Lopez-Paz]{verma2019interpolation}
Vikas Verma, Alex Lamb, Juho Kannala, Yoshua Bengio, and David Lopez-Paz.
\newblock {Interpolation Consistency Training for Semi-supervised Learning}.
\newblock \emph{International Joint Conference on Artificial Intelligence},
  2019{\natexlab{b}}.

\bibitem[Verma et~al.(2019{\natexlab{c}})Verma, Qu, Lamb, Bengio, Kannala, and
  Tang]{verma2019graphmix}
Vikas Verma, Meng Qu, Alex Lamb, Yoshua Bengio, Juho Kannala, and Jian Tang.
\newblock {GraphMix: Improved Training of GNNs for Semi-Supervised Learning}.
\newblock \emph{arXiv preprint arXiv:1909.11715}, 2019{\natexlab{c}}.

\bibitem[Wang et~al.(2019)Wang, Singh, Michael, Hill, Levy, and
  Bowman]{wang2019glue}
Alex Wang, Amanpreet Singh, Julian Michael, Felix Hill, Omer Levy, and Samuel~R
  Bowman.
\newblock {GLUE: A Multi-task Benchmark and Analysis Platform for Natural
  Language Understanding}.
\newblock \emph{International Conference on Learning Representations}, 2019.

\bibitem[Wang et~al.(2020)Wang, Li, Wang, and Gong]{wang2020neural}
Dongdong Wang, Yandong Li, Liqiang Wang, and Boqing Gong.
\newblock {Neural Networks Are More Productive Teachers Than Human Raters:
  Active Mixup for Data-Efficient Knowledge Distillation from a Blackbox
  Model}.
\newblock \emph{Computer Vision and Pattern Recognition}, 2020.

\bibitem[Wei \& Zou(2019)Wei and Zou]{wei2019eda}
Jason Wei and Kai Zou.
\newblock {EDA: Easy Data Augmentation Techniques for Boosting Performance on
  Text Classification Tasks}.
\newblock \emph{arXiv preprint arXiv:1901.11196}, 2019.

\bibitem[Williams et~al.(2018)Williams, Nangia, and Bowman]{williams2018broad}
Adina Williams, Nikita Nangia, and Samuel~R Bowman.
\newblock {A Broad-coverage Challenge Corpus for Sentence Understanding through
  Inference}.
\newblock \emph{North American Chapter of the Association for Computational
  Linguistics: Human Language Technologies}, 2018.

\bibitem[Wu et~al.(2019)Wu, Lv, Zang, Han, and Hu]{wu2019conditional}
Xing Wu, Shangwen Lv, Liangjun Zang, Jizhong Han, and Songlin Hu.
\newblock {Conditional BERT Contextual Augmentation}.
\newblock \emph{International Conference on Computational Science}, 2019.

\bibitem[Xie et~al.(2019)Xie, Dai, Hovy, Luong, and Le]{xie2019unsupervised}
Qizhe Xie, Zihang Dai, Eduard Hovy, Minh-Thang Luong, and Quoc~V Le.
\newblock {Unsupervised Data Augmentation for Consistency Training}.
\newblock \emph{arXiv preprint arXiv:1904.12848}, 2019.

\bibitem[Yang et~al.(2019)Yang, Dai, Yang, Carbonell, Salakhutdinov, and
  Le]{yang2019xlnet}
Zhilin Yang, Zihang Dai, Yiming Yang, Jaime Carbonell, Russ~R Salakhutdinov,
  and Quoc~V Le.
\newblock {XLNet: Generalized Autoregressive Pretraining for Language
  Understanding}.
\newblock \emph{Neural Information Processing Systems}, 2019.

\bibitem[Yu et~al.(2018)Yu, Dohan, Luong, Zhao, Chen, Norouzi, and
  Le]{yu2018qanet}
Adams~Wei Yu, David Dohan, Minh-Thang Luong, Rui Zhao, Kai Chen, Mohammad
  Norouzi, and Quoc~V Le.
\newblock {QANet: Combining Local Convolution with Global Self-attention for
  Reading Comprehension}.
\newblock \emph{arXiv preprint arXiv:1804.09541}, 2018.

\bibitem[Yun et~al.(2019)Yun, Han, Oh, Chun, Choe, and Yoo]{yun2019cutmix}
Sangdoo Yun, Dongyoon Han, Seong~Joon Oh, Sanghyuk Chun, Junsuk Choe, and
  Youngjoon Yoo.
\newblock {CutMix: Regularization Strategy to Train Strong Classifiers with
  Localizable Features}.
\newblock \emph{International Conference on Computer Vision}, 2019.

\bibitem[Zhang et~al.(2018)Zhang, Cisse, Dauphin, and
  Lopez-Paz]{zhang2018mixup}
Hongyi Zhang, Moustapha Cisse, Yann~N Dauphin, and David Lopez-Paz.
\newblock {mixup: Beyond Empirical Risk Minimization}.
\newblock \emph{International Conference on Learning Representations}, 2018.

\bibitem[Zhao et~al.(2019)Zhao, Gupta, Song, and Zhou]{zhao2019extreme}
Sanqiang Zhao, Raghav Gupta, Yang Song, and Denny Zhou.
\newblock {Extreme Language Model Compression with Optimal Subwords and Shared
  Projections}.
\newblock \emph{arXiv preprint arXiv:1909.11687}, 2019.

\end{thebibliography}
\bibliographystyle{iclr2021_conference}

\newpage
\appendix

\section{Proofs}

\begin{proof}[Proof of Theorem \ref{thm:finite}]
First of all, $\{\xb_i\}_{i=1}^a\cup \{\xb^\prime_i\}_{i=1}^b$ can be regarded as drawn from distribution $r(\xb) = \dfrac{a p(\xb) + bq(\xb)}{a+b}$.

Given $\mathcal{G}$ is finite, we have the following theorem
\begin{theorem}\citep{mohri2018foundations}
Let $l$ be a bounded loss function, hypothesis set $\mathcal{G}$ is finite. Then for any $\delta>0$, with probability at least $1-\delta$, the following inequality holds for all $g \in \mathcal{G}$:
\[
\mathcal{R}(f,g,p) - \mathcal{R}_{emp}(f,g,\{\xb_i\}_{i=1}^n) \leq M \sqrt{\dfrac{\log(\vert \mathcal{G}\vert/ \delta)}{2n}}
\]
\end{theorem}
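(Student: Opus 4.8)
The plan is to derive this as the standard finite-class uniform-convergence bound, combining a tail inequality for a single fixed hypothesis with a union bound over $\mathcal{G}$. First I would fix an arbitrary but specific $g \in \mathcal{G}$ (the teacher $f$ being held fixed throughout) and note that, because the $\xb_i$ are i.i.d. draws from $p$, the quantities $Z_i := l(f(\xb_i), g(\xb_i))$ are i.i.d. random variables, each confined to an interval of length $M$ by the boundedness hypothesis. Their sample mean is exactly $\mathcal{R}_{emp}(f,g,\{\xb_i\}_{i=1}^n)$ and their common expectation is exactly $\mathcal{R}(f,g,p)$, so the left-hand side of the claim is precisely the downward deviation of an empirical average of bounded i.i.d. variables from its mean.

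The core tool is Hoeffding's inequality in its one-sided form, which for each fixed $g$ and every $t > 0$ gives
\[
\Pr\!\left[\mathcal{R}(f,g,p) - \mathcal{R}_{emp}(f,g,\{\xb_i\}_{i=1}^n) > t\right] \leq \exp\!\left(-\frac{2 n t^2}{M^2}\right).
\]
Since $\mathcal{G}$ is finite, I would then take a union bound over all its members to control the worst case:
\[
\Pr\!\left[\,\exists\, g \in \mathcal{G}:\ \mathcal{R}(f,g,p) - \mathcal{R}_{emp}(f,g,\{\xb_i\}_{i=1}^n) > t\,\right] \leq |\mathcal{G}|\,\exp\!\left(-\frac{2 n t^2}{M^2}\right).
\]
Equating the right-hand side to $\delta$ and solving for $t$ yields $t = M\sqrt{\log(|\mathcal{G}|/\delta)/(2n)}$. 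Passing to the complementary event gives, with probability at least $1-\delta$, that $\mathcal{R}(f,g,p) - \mathcal{R}_{emp}(f,g,\{\xb_i\}_{i=1}^n) \leq M\sqrt{\log(|\mathcal{G}|/\delta)/(2n)}$ simultaneously for every $g \in \mathcal{G}$, which is the assertion.

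This is the textbook finite-hypothesis bound and presents no deep obstacle; the only points requiring care are bookkeeping. The main subtlety is matching the stated constant: to recover exactly the factor $M\sqrt{\,\cdot\,/(2n)}$, the loss must take values in an interval of length $M$ (for instance $[0,M]$), so that each $Z_i$ contributes $M^2$ to the sum of squared ranges appearing in Hoeffding's bound; a different normalization of the phrase ``bounded by $M$'' would merely rescale the constant. The second thing to get right is the scope of the union bound: it ranges over $\mathcal{G}$ alone, since $f$ and the evaluation distribution $p$ are fixed, so the complexity of the class enters only through the logarithmic term $\log|\mathcal{G}|$ inside the square root.
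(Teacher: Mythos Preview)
Your argument is correct and is precisely the standard textbook derivation: Hoeffding's inequality applied to the bounded i.i.d.\ losses $Z_i = l(f(\xb_i), g(\xb_i))$ for a fixed $g$, followed by a union bound over the finite class $\mathcal{G}$, then inversion to obtain the stated deviation term $M\sqrt{\log(|\mathcal{G}|/\delta)/(2n)}$.

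There is nothing to compare against in the paper itself, however: the paper does not supply a proof of this statement but merely quotes it as a known result from \cite{mohri2018foundations}, invoking it as a black box inside the proof of Theorem~\ref{thm:finite}. Your write-up therefore goes beyond what the paper does by actually filling in the argument, and it does so in exactly the way the cited reference would.
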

Thus we have in our case:
\[
\mathcal{R}(f,g_p,p) - \mathcal{R}_{emp}(f,g_p,\{\xb_i\}_{i=1}^n) \leq \epsilon_p \leq M\sqrt{\dfrac{\log(\vert \mathcal{G}\vert/ \delta)}{2n}}
\]
and 
\begin{align}
    &\mathcal{R}(f,g^*,p) - \mathcal{R}_{emp}(f,g^*,\{\xb_i\}_{i=1}^a\cup \{\xb^\prime_i\}_{i=1}^b ) \nonumber \\
    = & \mathcal{R}(f,g^*, r) - \mathcal{R}_{emp}(f,g^*,\{\xb_i\}_{i=1}^a\cup \{\xb^\prime_i\}_{i=1}^b ) + \int l(f(\xb), g^*(\xb))(p(\xb) - r(\xb))d \xb \nonumber \\
    = & \mathcal{R}(f,g^*, r) - \mathcal{R}_{emp}(f,g^*,\{\xb_i\}_{i=1}^a\cup \{\xb^\prime_i\}_{i=1}^b )  + \dfrac{b}{a+b} \int l(f(\xb), g^*(\xb))(p(\xb) - q(\xb))d \xb \nonumber  \\
    \leq & \mathcal{R}(f,g^*, r) - \mathcal{R}_{emp}(f,g^*,\{\xb_i\}_{i=1}^a\cup \{\xb^\prime_i\}_{i=1}^b )  + \int l(f(\xb), g^*(\xb))(p(\xb) - q(\xb))d \xb \nonumber \\
    \leq & M\sqrt{\dfrac{\log(\vert \mathcal{G}\vert/ \delta)}{2(a+b)}} + \triangle \label{eq:finite_case_1}
\end{align}
where $\triangle = \int l(f(\xb), g^*(\xb))(p(\xb) - q(\xb))d \xb$.
If 

\[
b \geq \dfrac{M^2\log(\vert \mathcal{G}\vert/ \delta)}{2(\epsilon_p-\triangle)^2} - a
\]
then

\[
2(a+b) \geq \dfrac{M^2\log(\vert \mathcal{G}\vert/ \delta)}{(\epsilon_p-\triangle)^2}
\]
\[
(\epsilon_p-\triangle)^2\geq \dfrac{M^2\log(\vert \mathcal{G}\vert/ \delta)}{2(a+b) }
\]
\[
\epsilon_p \geq M\sqrt{\dfrac{\log(\vert \mathcal{G}\vert/ \delta)}{2(a+b) }} +\triangle
\]

Substitute into \eqref{eq:finite_case_1}, we have
\[
\mathcal{R}(f,g^*,p) - \mathcal{R}_{emp}(f,g^*,\{\xb_i\}_{i=1}^a\cup \{\xb^\prime_i\}_{i=1}^b ) \leq \epsilon_p
\]
Recall the definition of $\epsilon^*$, which is the minimum value of all possible $\epsilon$ satisfying 
\[
\mathcal{R}(f,g^*,p) - \mathcal{R}_{emp}(f,g^*,\{\xb_i\}_{i=1}^a\cup \{\xb^\prime_i\}_{i=1}^b ) \leq  \epsilon
\]
we know that $\epsilon^* \leq \epsilon_p$. Let $c = 2(\epsilon_p-\triangle)^2$, we can conclude the theorem.

\end{proof}

\begin{proof}[Proof of Theorem \ref{thm:rad}]
First of all, $\{\xb_i\}_{i=1}^a\cup \{\xb^\prime_i\}_{i=1}^b$ can be regarded as drawn from distribution $r(\xb) = \dfrac{a p(\xb) + bq(\xb)}{a+b}$.

\begin{theorem}\citep{mohri2018foundations}
Let $l$ be a non-negative loss function upper bounded by $M>0$, and for any fixed $\yb$, $l(\yb, \yb^\prime)$ is $L$-Lipschitz for some $L>0$, then with probability at least $1-\delta$,
\[
\mathcal{R}(f,g,p) - \mathcal{R}_{emp}(f,g,\{\xb_i\}_{i=1}^n) \leq 2L\mathfrak{R}_p(\mathcal{G}) + M\sqrt{\dfrac{log(1/\delta)}{2n}}
\]
\end{theorem}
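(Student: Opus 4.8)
The plan is to prove this as the classical symmetrization-plus-contraction bound for a Lipschitz loss. First I would pass from the student class $\mathcal{G}$ to the induced \emph{loss class} $\mathcal{H} = \{\xb \mapsto l(f(\xb), g(\xb)) : g \in \mathcal{G}\}$, so that the left-hand side is exactly the deviation between the population and empirical means of a member of $\mathcal{H}$. Writing $\Phi(\{\xb_i\}_{i=1}^n) = \sup_{g \in \mathcal{G}} \big( \mathcal{R}(f,g,p) - \mathcal{R}_{emp}(f,g,\{\xb_i\}_{i=1}^n) \big)$, it suffices to control $\Phi$ with high probability, since the claimed inequality then follows for every individual $g$ because each summand is dominated by the supremum.

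The second step is concentration of $\Phi$. Because every loss value lies in $[0,M]$, replacing a single sample $\xb_i$ alters $\frac{1}{n}\sum_j l(f(\xb_j),g(\xb_j))$ by at most $M/n$, and hence alters $\Phi$ by at most $M/n$. McDiarmid's bounded-differences inequality then gives, with probability at least $1-\delta$,
\[
\Phi(\{\xb_i\}_{i=1}^n) \leq \mathbb{E}\big[\Phi(\{\xb_i\}_{i=1}^n)\big] + M\sqrt{\frac{\log(1/\delta)}{2n}},
\]
which supplies the additive $M\sqrt{\log(1/\delta)/(2n)}$ term exactly.

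The third step bounds the expected supremum by a Rademacher complexity via the standard ghost-sample symmetrization argument, yielding $\mathbb{E}[\Phi] \leq 2\mathfrak{R}_p(\mathcal{H})$, where $\mathfrak{R}_p(\mathcal{H}) = \mathbb{E}\big[\sup_{g \in \mathcal{G}} \frac{1}{n}\sum_{i=1}^n \sigma_i\, l(f(\xb_i),g(\xb_i))\big]$ and the $\sigma_i$ are independent Rademacher signs. Finally I would strip the loss off the complexity term by contraction: for each fixed $\xb_i$ the map $\yb^\prime \mapsto l(f(\xb_i),\yb^\prime)$ is $L$-Lipschitz, so the Ledoux--Talagrand contraction principle gives $\mathfrak{R}_p(\mathcal{H}) \leq L\,\mathfrak{R}_p(\mathcal{G})$. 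Combining the three displays yields $\Phi \leq 2L\,\mathfrak{R}_p(\mathcal{G}) + M\sqrt{\log(1/\delta)/(2n)}$, which is the stated bound.

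The main obstacle is the contraction step. The student outputs $g(\xb)$ are vector-valued in $\mathbb{R}^C$, so the elementary scalar contraction lemma does not apply verbatim; one must invoke a vector-valued contraction inequality (a multivariate form of Talagrand's lemma) to control $\mathbb{E}\sup_{g} \frac{1}{n}\sum_i \sigma_i\, l(f(\xb_i),g(\xb_i))$ by $L$ times the Rademacher complexity of $\mathcal{G}$, taking care that the numerical constant matches the claimed factor $L$ and that $\mathfrak{R}_p(\mathcal{G})$ is read as the (vector-valued) complexity of the student class. The remaining ingredients—symmetrization and McDiarmid—are routine once the loss class $\mathcal{H}$ has been set up.
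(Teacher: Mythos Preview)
The paper does not actually prove this statement: it is quoted verbatim as a known result from \cite{mohri2018foundations} and used as a black box inside the proof of Theorem~\ref{thm:rad}. So there is no ``paper's own proof'' to compare against; your proposal is supplying a proof where the authors simply cite one.

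That said, your outline is exactly the standard textbook derivation (and is how Mohri et al.\ themselves obtain it): pass to the loss class, apply McDiarmid with difference bound $M/n$ to get the $M\sqrt{\log(1/\delta)/(2n)}$ term, symmetrize to bound the expected uniform deviation by $2\mathfrak{R}_p(\mathcal{H})$, and then use Talagrand contraction to replace $\mathfrak{R}_p(\mathcal{H})$ by $L\,\mathfrak{R}_p(\mathcal{G})$. The structure and constants are correct.

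Your flagged obstacle is real and worth keeping explicit. In this paper $g(\xb)\in\mathbb{R}^C$, so the scalar Ledoux--Talagrand lemma does not apply directly; one needs a vector-valued contraction result (e.g.\ the Maurer-type inequality), and those typically produce a constant that depends on how $\mathfrak{R}_p(\mathcal{G})$ is defined for vector-valued classes (componentwise Rademacher variables versus a single sign per sample). The stated bound with the clean factor $2L$ is the form one gets in the scalar case; the paper is silent on this point because it is merely citing the result, so for a self-contained write-up you should either (i) restrict to scalar outputs, or (ii) state precisely which vector contraction lemma you invoke and accept whatever constant it yields.
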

Thus we have 
\[
\mathcal{R}(f,g,p) - \mathcal{R}_{emp}(f,g,\{\xb_i\}_{i=1}^n) \leq \epsilon_p \leq 2L\mathfrak{R}_p(\mathcal{G}) + M\sqrt{\dfrac{log(1/\delta)}{2n}}
\]
where $\mathfrak{R}_p(\mathcal{G})$ are Rademacher complexity over all samples of size $n$ samples from $p(\xb)$.

We also have
\begin{align}
    &\mathcal{R}(f,g^*,p) - \mathcal{R}_{emp}(f,g^*,\{\xb_i\}_{i=1}^a\cup \{\xb^\prime_i\}_{i=1}^b ) \nonumber \\
    = & \mathcal{R}(f,g^*, r) - \mathcal{R}_{emp}(f,g^*,\{\xb_i\}_{i=1}^a\cup \{\xb^\prime_i\}_{i=1}^b ) + \int l(f(\xb), g^*(\xb))(p(\xb) - r(\xb))d \xb \nonumber \\
    = & \mathcal{R}(f,g^*, r) - \mathcal{R}_{emp}(f,g^*,\{\xb_i\}_{i=1}^a\cup \{\xb^\prime_i\}_{i=1}^b )  + \dfrac{b}{a+b} \int l(f(\xb), g^*(\xb))(p(\xb) - q(\xb))d \xb \nonumber  \\
    \leq & \mathcal{R}(f,g^*, r) - \mathcal{R}_{emp}(f,g^*,\{\xb_i\}_{i=1}^a\cup \{\xb^\prime_i\}_{i=1}^b )  + \int l(f(\xb), g^*(\xb))(p(\xb) - q(\xb))d \xb \nonumber \\
    \leq & 2L\mathfrak{R}_r(\mathcal{G}) + M\sqrt{\dfrac{log(1/\delta)}{2(a+b)}} + \triangle \label{eq:rad_1}
\end{align}
where $\triangle = \int l(f(\xb), g^*(\xb))(p(\xb) - q(\xb))d \xb$. $\mathfrak{R}_r(\mathcal{G})$ are Rademacher complexity over all samples of size $(a+b)$ samples from $r(\xb) = \dfrac{a p(\xb) + bq(\xb)}{a+b}$.

If 
\[
b \geq \dfrac{M^2\log(1/\delta)}{2(\epsilon_p - \triangle -2L\mathfrak{R}_r(\mathcal{G}))^2}-a
\] 
then:
\[
2(a + b) \geq  \dfrac{M^2\log(1/\delta)}{(\epsilon_p - \triangle -2L\mathfrak{R}_r(\mathcal{G}))^2}
\] 
\[
\epsilon_p - \triangle -2L\mathfrak{R}_r(\mathcal{G}) \geq  M\sqrt{\dfrac{\log(1/\delta)}{2(a + b)}}
\] 
\[
\epsilon_p \geq  M\sqrt{\dfrac{\log(1/\delta)}{2(a + b)}} + \triangle + 2L\mathfrak{R}_r(\mathcal{G}) 
\]

Substitute into \eqref{eq:rad_1}, we have:
\[
\mathcal{R}(f,g^*,p) - \mathcal{R}_{emp}(f,g^*,\{\xb_i\}_{i=1}^a\cup \{\xb^\prime_i\}_{i=1}^b ) \leq  \epsilon_p
\]
Recall the definition of $\epsilon^*$, which is the minimum value of all possible $\epsilon$ satisfying 
\[
\mathcal{R}(f,g^*,p) - \mathcal{R}_{emp}(f,g^*,\{\xb_i\}_{i=1}^a\cup \{\xb^\prime_i\}_{i=1}^b ) \leq  \epsilon
\]
we know that $\epsilon^* \leq \epsilon_p$. Let $c= 2(\epsilon_p - \triangle -2L\mathfrak{R}_r(\mathcal{G}))^2$, we can conclude the theorem.
\end{proof}

\begin{proof}[Proof of Theorem \ref{thm:baxter}]
Similar to previous theorems, we write
\begin{align}
    &\mathcal{R}(f,g^*,p) - \mathcal{R}_{emp}(f,g^*,\{\xb_i\}_{i=1}^a\cup \{\xb^\prime_i\}_{i=1}^b ) \nonumber \\
    = & \mathcal{R}(f,g^*, \dfrac{a p + bq}{a+b}) - \mathcal{R}_{emp}(f,g^*,\{\xb_i\}_{i=1}^a\cup \{\xb^\prime_i\}_{i=1}^b ) + \int l(f(\xb), g^*(\xb))(p(\xb) - \dfrac{a p(\xb) + bq(\xb)}{a+b})d \xb \nonumber \\
    = & \mathcal{R}(f,g^*, \dfrac{a p + bq}{a+b}) - \mathcal{R}_{emp}(f,g^*,\{\xb_i\}_{i=1}^a\cup \{\xb^\prime_i\}_{i=1}^b )  + \dfrac{b}{a+b} \int l(f(\xb), g^*(\xb))(p(\xb) - q(\xb))d \xb \nonumber  \\
    \leq & \mathcal{R}(f,g^*, \dfrac{a p + bq}{a+b}) - \mathcal{R}_{emp}(f,g^*,\{\xb_i\}_{i=1}^a\cup \{\xb^\prime_i\}_{i=1}^b )  + \triangle \label{eq:baxter_1}
\end{align}
where $\triangle = \int l(f(\xb), g^*(\xb))(p(\xb) - q(\xb))d \xb$. For notation consistency, we write $\mathcal{R}(f,g^*, \dfrac{a p + bq}{a+b}) = \int l(f(\xb) - g(\xb)) \dfrac{a p(\xb) + bq(\xb)}{a+b}d\xb$. However, $\{\xb_i\}_{i=1}^a\cup \{\xb^\prime_i\}_{i=1}^b$ are not drawn from the same distribution (which is $r(\xb)=\dfrac{a p(\xb) + bq(\xb)}{a+b}$ in previous cases).

Let $\gamma = \lfloor \dfrac{a+b}{a}\rfloor$, we split $\{\xb_i\}_{i=1}^a\cup \{\xb^\prime_i\}_{i=1}^b$ into $\gamma$ parts that don't overlap with each other. The first part is $\{\xb_i\}_{i=1}^a$,  all the other parts has at least $a$ elements from $\{\xb^\prime_i\}_{i=1}^b$.

Let
\[
\lambda = \sqrt{\dfrac{64}{b} \log(4/\delta) + \dfrac{64}{a}\log C(\mathcal{G})}
\]
where $ C(\mathcal{G})$ is space capacity defined in Definition 4 in \cite{Baxter_2000}, which depends on $\epsilon^*$ and $\mathcal{G}$.

By Theorem 4 in \cite{Baxter_2000}, 
\[
\left[\mathcal{R}(f,g^*, \dfrac{a p + bq}{a+b}) - \mathcal{R}_{emp}(f,g^*,\{\xb_i\}_{i=1}^a\cup \{\xb^\prime_i\}_{i=1}^b )\right]^2 \leq \max\{ \dfrac{64}{\gamma a} \log(\dfrac{4 C(\mathcal{G}^\gamma)}{\delta}), \dfrac{16}{a}\}
\]
By Theorem 5 in \cite{Baxter_2000}, 
\[
\dfrac{64}{\gamma a} \log(\dfrac{4 C(\mathcal{G}^\gamma)}{\delta}) = \dfrac{64}{\gamma a} (\log(\dfrac{4}{\delta}) + \log (C(\mathcal{G}^\gamma))) \leq \dfrac{64}{\gamma a} (\log(\dfrac{4}{\delta}) + \gamma \log (C(\mathcal{G}))) \leq \lambda^2
\]
The last inequality comes from $b\leq \gamma a$, which is because of $\gamma = \lfloor \dfrac{a+b}{a}\rfloor$. Then we have
\[
\left[\mathcal{R}(f,g^*, \dfrac{a p + bq}{a+b}) - \mathcal{R}_{emp}(f,g^*,\{\xb_i\}_{i=1}^a\cup \{\xb^\prime_i\}_{i=1}^b )\right]^2 \leq \max\{ \dfrac{64}{\gamma a} \log(\dfrac{4 C(\mathcal{G}^\gamma)}{\delta}), \dfrac{16}{a}\} \leq \max \{\lambda^2, \dfrac{16}{a}\}
\]

If 
\[
b\geq \dfrac{64\log(4/\delta)}{(\epsilon_p - \triangle)^2 - 64 \log C(\mathcal{G}) /a}
\]
Then
\[
\lambda^2 \leq  \dfrac{64}{a}\log C(\mathcal{G}) + 64\log(\dfrac{4}{\delta})\dfrac{(\epsilon_p - \triangle)^2 - 64 \log C(\mathcal{G}) /a}{64\log(4/\delta)}
\]
\begin{align}\label{eq:baxter_condition_1}
\lambda^2 \leq (\epsilon_p - \triangle)^2
\end{align}

If 
\[
\dfrac{16}{(\epsilon_p - \triangle)^2} \leq a
\]
then 
\begin{align}\label{eq:baxter_condition_2}
    \dfrac{16}{a} \leq (\epsilon_p - \triangle)^2
\end{align}

Combine \eqref{eq:baxter_condition_1} and \eqref{eq:baxter_condition_2}, we have

\[
\mathcal{R}(f,g^*, \dfrac{a p + bq}{a+b}) - \mathcal{R}_{emp}(f,g^*,\{\xb_i\}_{i=1}^a\cup \{\xb^\prime_i\}_{i=1}^b ) \leq \epsilon_p - \triangle
\]

Substitute into \eqref{eq:baxter_1}, we have:
\[
\mathcal{R}(f,g^*,p) - \mathcal{R}_{emp}(f,g^*,\{\xb_i\}_{i=1}^a\cup \{\xb^\prime_i\}_{i=1}^b ) \leq  \epsilon_p
\]
Recall the definition of $\epsilon^*$, which is the minimum value of all possible $\epsilon$ satisfying 
\[
\mathcal{R}(f,g^*,p) - \mathcal{R}_{emp}(f,g^*,\{\xb_i\}_{i=1}^a\cup \{\xb^\prime_i\}_{i=1}^b ) \leq  \epsilon
\]
we know that $\epsilon^* \leq \epsilon_p$. 

\end{proof}

\section{Variance Analysis}
For the purpose of getting a sense of variance, we run experiments with additional random seeds on MRPC and RTE, which are relatively smaller datasets, and MNLI and QNLI, which are relatively larger datasets. Mean and standard deviation on the dev set of these GLUE datasets are reported in Table~\ref{tab:glue_devvar}. We observe the variance of the same model's performance to be small, especially on the relatively larger datasets.  
\begin{table}[h!]
\setlength{\tabcolsep}{6pt}
\def\arraystretch{1.06}
\centering
\begin{tabular}{c || c c c c} 
    \toprule[1.2pt]
 Model  & MRPC  & MNLI-m & QNLI & RTE\\ 

\hline
 $\mathrm{BERT}_6$-TMKD+BT  & \textbf{89.79}$\pm$0.27/\textbf{85.04}$\pm$0.48  & 82.05$\pm$0.11 & 88.42$\pm$0.06 & \textbf{69.37}$\pm$0.50 \\ 
 $\mathrm{BERT}_6$-SM+TMKD+BT & 89.64$\pm$0.38/84.43$\pm$0.36 & \textbf{82.41}$\pm$0.12 & \textbf{88.76}$\pm$0.15 & 68.02$\pm$0.11 \\
  
\toprule[1.2pt]

 $\mathrm{BERT}_3$-TMKD+BT & \textbf{84.79}$\pm$0.33/\textbf{75.82}$\pm$0.48 & 77.16$\pm$0.03 & 84.60$\pm$0.07 & \textbf{62.47}$\pm$0.36 \\
 $\mathrm{BERT}_3$-SM+TMKD+BT & 84.53$\pm$0.39/75.85$\pm$0.60 & \textbf{77.42}$\pm$0.11 & \textbf{84.88}$\pm$0.06 & 60.83$\pm$0.18 \\
 \bottomrule[1.2pt]
\end{tabular}
\caption{Mean and variance reported for $\mathrm{BERT}_6$-TMKD+BT,$\mathrm{BERT}_6$-SM+TMKD+BT,$\mathrm{BERT}_3$-TMKD+BT and $\mathrm{BERT}_3$-SM+TMKD+BT.
}
\vspace{-5mm}
\label{tab:glue_devvar}
\end{table}

\end{document}